\theoremstyle{plain}
\newtheorem*{theorem*}{Theorem}
\newtheorem{proposition}{Proposition}
\theoremstyle{definition}
\theoremstyle{remark}
\DeclareMathOperator{\sinc}{sinc}
\newcommand*\diff{\mathop{}\!\mathrm{d}}
\title{Categorical Flow Matching on Statistical Manifolds}
\author{%
  Chaoran Cheng \\
  University of Illinois Urbana-Champaign\\
  \texttt{chaoran7@illinois.edu} \\
  % examples of more authors
  \And
  Jiahan Li \\
  Peking University \\
  \texttt{lijiahanypc@pku.edu.cn} \\
  \AND
  Jian Peng \\
  University of Illinois Urbana-Champaign \\
  \texttt{jianpeng@illinois.edu} \\
  \And
  Ge Liu \\
  University of Illinois Urbana-Champaign \\
  \texttt{geliu@illinois.edu}
  % \And
  % Coauthor \\
  % Affiliation \\
  % Address \\
  % \texttt{email} \\
}
\begin{document}

\maketitle

\renewcommand{\thefootnote}{\fnsymbol{footnote}}

%%%%%%%%% ABSTRACT
\begin{abstract}
We introduce Statistical Flow Matching (SFM), a novel and mathematically rigorous flow-matching framework on the manifold of parameterized probability measures inspired by the results from information geometry. We demonstrate the effectiveness of our method on the discrete generation problem by instantiating SFM on the manifold of categorical distributions whose geometric properties remain unexplored in previous discrete generative models. Utilizing the Fisher information metric, we equip the manifold with a Riemannian structure whose intrinsic geometries are effectively leveraged by following the shortest paths of geodesics. We develop an efficient training and sampling algorithm that overcomes numerical stability issues with a diffeomorphism between manifolds. Our distinctive geometric perspective of statistical manifolds allows us to apply optimal transport during training and interpret SFM as following the steepest direction of the natural gradient. Unlike previous models that rely on variational bounds for likelihood estimation, SFM enjoys the exact likelihood calculation for arbitrary probability measures. We manifest that SFM can learn more complex patterns on the statistical manifold where existing models often fail due to strong prior assumptions. Comprehensive experiments on real-world generative tasks ranging from image, text to biological domains further demonstrate that SFM achieves higher sampling quality and likelihood than other discrete diffusion or flow-based models. Our code is available at \url{https://github.com/ccr-cheng/statistical-flow-matching}.
\end{abstract}

% advantage: introduce fm over statistic manifold, - continuous, minibatch OT can be applied, - follow geodesic, shortest path for better training - exact likelihood, -better performance -potential to generalize to any parameterized prob
% contribution: - solve limitation in existing discrete generative model - first introduce change of measure under data-driven statistical interpolation generative modeling, - effective training, mathematically rigorous.

% v1
% We introduce statistical flow matching (SFM), a novel flow-matching framework on the manifold of parameterized probability measures. Utilizing the Fisher information metric, we equip the manifold with a Riemannian structure whose intrinsic geometric properties are effectively leveraged by the model. We particularly focus on categorical distributions which find wide applications in diverse discrete generation tasks. Our distinctive geometric formulation allows us to apply optimal transport on statistical manifolds, which contributes to a faster training stage. Built upon the flow-matching framework, our model also enjoys exact likelihood calculation for arbitrary discrete input. Through comprehensive experiments across various discrete generation tasks involving computer vision, natural language processing, and bioinformatics, our statistical flow matching framework demonstrates superior performance over other diffusion or flow-based models and also reaches competitive results with autoregressive models.

%%%%%%%%% BODY TEXT
\section{Introduction}
Recently, conditional flow matching (CFM) models \cite{lipman2022flow} have achieved remarkable success in various generative domains including image generation \cite{lipman2022flow,dao2023flow,hu2024latent}, molecule \cite{stark2023harmonic,song2024equivariant,klein2024equivariant} and protein design \cite{yim2023fast,bose2023se,li2024full}, and sequence generation \cite{stark2024dirichlet,lou2023discrete,campbell2024generative}. While attempts to generalize CFM and diffusion models to discrete categorical data have been made, they typically exert ad hoc assumptions on the structure of the discrete distribution. One group of work relies on stochastic jumps of Markov chains in either the discrete-time \cite{austin2021structured,lou2023discrete,alamdari2023protein} or continuous-time setting \cite{campbell2022continuous,santos2023blackout,campbell2024generative} that discards the continuous nature of the underlying categorical distributions. Other work directly works with the probability simplex \cite{avdeyev2023dirichlet,stark2024dirichlet} or the corresponding logit space \cite{hoogeboom2021argmax,mahabadi2023tess,graves2023bayesian} with potentially imperfect assumptions that fail to capture the underlying true geometry of the statistical manifold. Furthermore, likelihood is often approximated by variational bounds in previous discrete generative models due to the lack of tractable exact likelihood.

% As categorical data can be naturally viewed as points on the manifold of categorical distributions, we are interested in incorporating the intrinsic geometry of this \emph{statistical manifold}.
We propose to incorporate the intrinsic geometry of the \emph{statistical manifold} by viewing categorical data as points on the statistical manifold of categorical distributions.
Inspired by the mathematical results from information theory, we utilize the Fisher information metric \cite{rao1992information} to naturally equip such a manifold with a Riemannian structure and develop an efficient generative training scheme for learning the vector fields without stability issues.
%We dub our framework \emph{Statistical Flow Matching} (SFM), as it can be viewed as flow matching on the statistical manifold. 
We summarize our contributions as the following:

(1) We propose \emph{Statistical Flow Matching} (SFM), a novel and mathematically rigorous generative framework on the manifold of parameterized probability measures. SFM does not pose any prior assumptions on the statistical manifold but instead deduces its intrinsic geometry via mathematical tools. To tackle the discrete generation problem, we instantiate SFM on the manifold of categorical distributions. We deduce closed-form exponential and logarithm maps and develop an efficient flow-matching training algorithm that avoids numerical issues by leveraging a diffeomorphism between manifolds. SFM effectively leverages the intrinsic geometric properties by following the shortest paths of geodesics between the noise and target distributions on the statistical manifold.

(2) Our distinctive geometric perspective of the statistical manifold allows us to further apply optimal transport during training and derive tractable exact likelihood for any given sample of probability measure, both of which are unachievable for most existing methods. We also introduce new theoretical insights by establishing connections among Riemannian flow matching, information geometry, and natural gradient descent, which allows us to interpret SFM as following the steepest descent of the \emph{natural gradient} from the optimization angle.
%Built upon continuous normalizing flow, SFM also enjoys the advantage of exact likelihood calculation.
% Our proposed SFM contributes to discrete generative modeling in the following aspects.
% \paragraph{Mathematically theoretic viewpoint}
% SFM does not pose any prior assumptions on the statistical manifold but instead deduces its intrinsic geometry via mathematical tools. Following the shortest paths of geodesics between the noise and target distributions, SFM effectiveness captures such intrinsic geometric properties. We deduce closed-form exponential and logarithm maps by leveraging a diffeomorphism between manifolds to avoid numerical issues during the training of the flow model.
% \paragraph{Geometrically intuitive interpretation}
%  Our distinctive geometric perspective of the statistical manifold allows us to apply optimal transport during training and provide an interpretation from the optimization viewpoint of SFM as following the steepest descent of the \emph{natural gradient}. Built upon continuous normalizing flow, SFM also enjoys the advantage of exact likelihood calculation.

(3) We demonstrated with a toy example on simplex that SFM can learn more complex patterns on the statistical manifold where existing models often fail due to impromptu prior assumptions. We further conducted extensive experiments on diverse real-world discrete generation tasks involving computer vision, natural language processing, and bioinformatics. SFM consistently outperformed existing diffusion or flow-based models and also achieved comparable results with autoregressive models on character-level generation.

\section{Preliminary}
\subsection{Information Geometry} \label{sec:prelim_infogeo}
In this work, we are interested in learning a parameterized family of probability measures. It is known from information theory that all probability measures over the sample space form the structure known as \emph{statistical manifold}.
Mathematically, consider probability densities $p=\frac{\diff\mu}{\diff\nu}:\mathcal{X}\to\mathbb{R}$ defined by the Radon-Nikodym derivative where $\mu$ is a probability measure on the sample space $\mathcal{X}$ and $\nu$ is the reference measure on $\mathcal{X}$. Suppose the statistical manifold $\mathcal{P}=\mathcal{P}(\mathcal{X})=\{p:\int_\mathcal{X}\diff\mu=\int_\mathcal{X}p(x;\theta)\diff\nu=1\}$ is parameterized by $\theta=(\theta_1,\theta_2,\dots,\theta_n)\in\Theta$, this parameterization naturally provides a coordinate system for $\mathcal{P}$ on which each point is a probability measure $\mu$ with the corresponding probability density function $p(x;\theta)$. The \textit{Fisher information metric} is defined as
\begin{equation}
    g_{jk}(\theta)=\mathbb{E}_X\left[\frac{\partial\log p(X;\theta)}{\partial \theta_j}\frac{\partial\log p(X;\theta)}{\partial \theta_k}\right]=\int_\mathcal{X}\frac{\partial\log p(x;\theta)}{\partial \theta_j}\frac{\partial\log p(x;\theta)}{\partial \theta_k} p(x;\theta)\diff\nu . \label{eqn:fisher}
\end{equation}

Rao demonstrated in his seminal paper \cite{rao1992information} that statistical manifold can be equipped with the Fisher information metric to obtain a Riemannian structure, the study of which is known as \emph{information geometry} \cite{atkinson1981rao,amari2000methods,ay2017information}. This geometric view of statistical manifolds allows us to derive key geometric concepts for our statistical flow matching framework. For example, a \emph{geodesic} $\gamma(t):[0,1]\to \mathcal{P}$ defines a ``shortest'' path (under the Riemannian metric) connecting two probability measures on the statistical manifold. The \emph{geodesic distance} between two probability measures, also known as the Fisher-Rao distance \cite{rao1992information}, measures the similarity between them. The \emph{tangent space} $T_\mu(\mathcal{P})$ at a point $\mu\in\mathcal{P}$ can be naturally identified with the affine subspace $T_\mu(\mathcal{P})=\{\nu|\int_\mathcal{X}\diff\nu=0\}$ where each element $\nu$ is a signed measure over $\mathcal{X}$. The \emph{exponential map} $\exp_\mu:T_\mu(\mathcal{P})\to\mathcal{P}$ and \emph{logarithm map} $\log_\mu:\mathcal{P}\to T_\mu(\mathcal{P})$ can also be defined on the statistical manifold.
While the geodesic for a parameterized family of probability measures can be obtained numerically by solving the geodesic equation when closed-form expression is unknown (see Appendix \ref{sec:riemann}), it usually requires expensive simulations. Fortunately, closed-form geodesic distances are available for many common distributions including categorical, multinomial, and normal distributions \cite{miyamoto2023closed}, which motivates our method.

\subsection{Conditional Flow Matching on Riemannian Manifold}
The conditional flow matching (CFM) framework \cite{lipman2022flow} provides a simple yet powerful approach to generative modeling by learning a time-dependent vector field that pushes the prior noise distribution to any target data distribution. Such a flow-based model can be viewed as the continuous generalization of the score matching (diffusion) model \cite{song2019generative,song2020improved,ho2020denoising} while allowing for a more flexible design of the denoising process. The Riemannian flow matching framework \cite{chen2023riemannian} further extends CFM to general manifolds on which a well-defined distance metric can be efficiently computed.
% We will mainly follow the latter work as the manifolds encountered in this work are often non-Euclidean.

Consider a smooth Riemannian manifold $\mathcal{M}$ with the Riemannian metric $g$, a \emph{probability path} $p_t:[0,1]\to\mathcal{P(M)}$ is a curve of probability densities over $\mathcal{M}$. A \emph{flow} $\psi_t:[0,1]\times \mathcal{M}\to\mathcal{M}$ is a time-dependent diffeomorphism defined by a time-dependent vector field $u_t:[0,1]\times \mathcal{M}\to T\mathcal{M}$ via the ordinary differential equation (ODE): $\frac{\diff}{\diff t}\psi_t(x)=u_t(\psi_t(x))$. The flow matching objective directly regresses the vector field $u_t$ with a time-dependent neural net $v(x_t,t)$ where $x_t:=\psi_t(x)$. However, this objective is generally intractable. Both \cite{lipman2022flow,chen2023riemannian} demonstrated that a tractable objective can be derived by conditioning on the target data $x_1$ at $t=1$ of the probability path. The Riemannian flow matching objective can be formulated as \cite{chen2023riemannian}
\begin{equation}
    \mathcal{L}=\mathbb{E}_{t\sim U[0,1],x_0\sim p_0(x),x_1\sim q(x)}[\|v(x_t,t)-u_t(x_t|x_0,x_1)\|_g^2] \label{eqn:rfm_loss}
\end{equation}
where $q$ is the data distribution, $p_0$ is the prior distribution, and $x_t:=\psi_t(x|x_0,x_1)$ denotes the conditional flow. \cite{chen2023riemannian} further demonstrated that if the exponential map and logarithm map can be evaluated in closed-form, the condition flow can be defined as $x_t=\exp_{x_0}(t\log_{x_0}x_1)$, and the corresponding vector field can be calculated as $u_t(x_t|x_0,x_1)=\frac{\diff}{\diff t}x_t=\log_{x_t}(x_1)/(1-t)$. We also adapt this formulation for our statistical flow, which we will elaborate on in the next section. We note that, since we are working with manifolds of probability measures $\mathcal{M}=\mathcal{P(X)}$, we will use $p,q$ for probability densities \emph{over} the manifold $\mathcal{P(X)}$ and $\mu,\nu$ for probability measures (or probability masses for discrete distributions) \emph{on} the manifold $\mathcal{P(X)}$ to avoid confusion.

\section{Method}
Different from previous work that treated each distribution separately, we adopt an integral viewpoint toward the manifold of probability distributions. In this section, we focus on the statistical manifold of categorical distributions to demonstrate the application of our method on discrete generation tasks. However, we emphasize that our proposed SFM is applicable to any statistical manifold with a closed-form geodesic distance and can be broadly used in generative tasks targeting probability measures on the statistical manifold.

\subsection{Statistical Manifold of Categorical Distributions}\label{sec:catsm}
Consider the discrete sample space $\mathcal{X}=\{1,2,\dots,n\}$, an $n$-class categorical distribution over $\mathcal{X}$ can be parameterized by $n$ parameters $\mu_1,\mu_2,\dots,\mu_n$ such that $\sum_{i=1}^n \mu_i=1,\mu_i\ge 0$. In this way, the reference measure $\nu$ is the counting measure and the probability measure $\mu$ can be written as the convex combination of the canonical basis of Dirac measures $\{\delta^i\}_{i=1}^n$ over $\mathcal{X}$: $\mu=\sum_{i=1}^n \mu_i \delta^i$. Geometrically, this manifold can be visualized as the $(n-1)$-dimensional simplex $\Delta^{n-1}$. The geodesic distance between two categorical distributions is given in \cite{rao1992information,miyamoto2023closed} as
\begin{equation}
    d_\text{cat}(\mu,\nu)=2\arccos \left(\sum_{i=1}^n\sqrt{\mu_i \nu_i}\right) . \label{eqn:d_cat}
\end{equation}
The tangent space at a point $\mu$ can be identified with $T_\mu(\mathcal{P})=\{u|\sum_{i=1}^n u_i=0\}$ and the corresponding inner product at point $\mu$ is defined as
\begin{equation}
    \langle u,v\rangle_\mu=\sum_{i=1}^n\frac{u_i v_i}{\mu_i},\quad \mu\in \mathcal{P}_+,u,v\in T_\mu(\mathcal{P}) \label{eqn:inner}
\end{equation}
where $\mathcal{P}_+$ denotes the statistical manifold of \emph{positive} categorical distributions. Note that the inner product is ill-defined on the boundary, causing numerical issues near the boundary. To circumvent this issue, we introduce the following diffeomorphism
\begin{equation}
    \pi:\mathcal{P}\to S^{n-1}_+,\quad \mu_i\mapsto x_i=\sqrt{\mu_i},  \label{eqn:mapping}
\end{equation}
which maps the original statistical manifold to the positive orthant of a unit $(n-1)$-sphere $S^{n-1}_+=\{x|\sum_{i=1}^n x_i^2=1, x_i\ge 0\}$ (see Fig.\ref{fig:model}). Note that we have $\|\pi(\mu)\|_2^2=\sum_{i=1}^n\sqrt{\mu_i}^2=\sum_{i=1}^n \mu_i=1$. The geodesic on $S^{n-1}_+$ follows the great circle, and the following proposition holds between the geodesic distance on $S^{n-1}_+$ and $\mathcal{P}$:
\begin{proposition}\label{thm:diffeo}
\begin{equation}
    d_S(\pi(\mu),\pi(\nu))=\frac{1}{2}d_\mathrm{cat}(\mu,\nu),\quad \mu,\nu \in\mathcal{P} .
\end{equation}
\end{proposition}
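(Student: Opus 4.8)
The plan is to prove the identity by direct computation of the geodesic distance on the sphere $S^{n-1}_+$ and comparing it with the given formula for $d_\mathrm{cat}$. Recall that on a unit sphere, the geodesic distance between two points $x,y$ is the great-circle angle, namely $d_S(x,y)=\arccos(\langle x,y\rangle)$ where $\langle\cdot,\cdot\rangle$ is the standard Euclidean inner product in $\mathbb{R}^n$. This is a standard fact about round spheres that I would invoke (or quickly justify by noting that the shortest path between two points on $S^{n-1}$ lies on the great circle through them, parameterized by arclength, and its length is the subtended angle).

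First I would substitute $x=\pi(\mu)$ and $y=\pi(\nu)$, so that $x_i=\sqrt{\mu_i}$ and $y_i=\sqrt{\nu_i}$. Then the Euclidean inner product becomes $\langle \pi(\mu),\pi(\nu)\rangle=\sum_{i=1}^n x_i y_i=\sum_{i=1}^n\sqrt{\mu_i}\sqrt{\nu_i}=\sum_{i=1}^n\sqrt{\mu_i\nu_i}$. Since both $\mu$ and $\nu$ lie in the simplex, their images lie on $S^{n-1}_+$, so both points are genuinely on the sphere and the great-circle formula applies. Hence
\begin{equation}
    d_S(\pi(\mu),\pi(\nu))=\arccos\left(\sum_{i=1}^n\sqrt{\mu_i\nu_i}\right).
\end{equation}
Comparing with the formula $d_\mathrm{cat}(\mu,\nu)=2\arccos\left(\sum_{i=1}^n\sqrt{\mu_i\nu_i}\right)$ from Equation \eqref{eqn:d_cat} immediately gives $d_S(\pi(\mu),\pi(\nu))=\tfrac12 d_\mathrm{cat}(\mu,\nu)$, which is the claim.

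One point that deserves a remark rather than being a genuine obstacle: the map $\pi$ is only a diffeomorphism between the open simplex $\mathcal{P}^\circ$ and the open positive orthant of the sphere (the square-root is not smooth where some coordinate vanishes), but the distance identity extends to the closed simplex by continuity of both sides in $\mu,\nu$, so the stated equality for all $\mu,\nu\in\mathcal{P}$ still holds. The only part requiring any care is making sure the ``geodesic distance on $S^{n-1}_+$'' is the same as the geodesic distance on the full sphere $S^{n-1}$ — i.e. that the minimizing great-circle arc between two points of the positive orthant stays within the positive orthant. This is true because the positive orthant is geodesically convex in $S^{n-1}$ (it is the intersection of the sphere with the convex cone $\{x_i\ge 0\}$, and for two points with nonnegative coordinates every point $(1-s)x+sy$ on the chord has nonnegative coordinates, hence so does its normalization, which traces the great-circle arc). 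I expect this geodesic-convexity check to be the main — though still routine — obstacle; everything else is a one-line substitution.
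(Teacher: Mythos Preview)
Your proof is correct but takes a different route from the paper's. The paper proves Proposition~\ref{thm:diffeo} via a local metric computation (its Proposition~2 in Appendix~\ref{sec:stat_manifold}): it shows that the pushforward of the Fisher metric under $\pi$ equals $1/4$ times the round metric on $S^{n-1}_+$, i.e.\ $\langle \partial_u\pi(\mu),\partial_v\pi(\mu)\rangle=\tfrac14\langle u,v\rangle_\mu$, and then integrates the arclength element along a geodesic to conclude that all geodesic distances scale by $1/2$. Your argument instead takes the closed-form expression for $d_\mathrm{cat}$ in Eq.~\eqref{eqn:d_cat} as given (the paper cites it from Rao and Miyamoto) and simply observes that substituting $x_i=\sqrt{\mu_i}$ into the great-circle formula reproduces it up to the factor $2$. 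Your approach is shorter and perfectly valid in context, but it relies on the closed form for $d_\mathrm{cat}$ as a black box; the paper's approach explains \emph{why} that closed form holds by exhibiting $\pi$ as a conformal isometry, and would work even if Eq.~\eqref{eqn:d_cat} had not been quoted. Your added remarks on continuity at the boundary and on geodesic convexity of the positive orthant are a nice touch that the paper omits.
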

A proof is provided in Appendix \ref{sec:stat_manifold}. This indicates that we can work with the geodesic distance on the unit sphere instead with up to a constant factor:
\begin{equation}
    d_S(x,y)=\arccos(\langle x,y\rangle),\quad x,y\in S^{n-1}_+ . \label{eqn:d_sphere}
\end{equation}
The geodesic distance $d_S$ and the inner product $\langle\cdot,\cdot\rangle$ are well-defined for the boundary, and we found this transform led to the practical stabilized training of the flow model. Visualizations of the Riemannian geometry on the statistical manifold of 3-class categorical distributions and the corresponding Euclidean geometry on the simplex are provided in Fig.\ref{fig:geo} for comparison. The straight lines under the Euclidean assumptions fail to capture the true curved geometry of the statistical manifold.

\begin{figure}[ht]
    \centering
    \includegraphics[width=.7\linewidth]{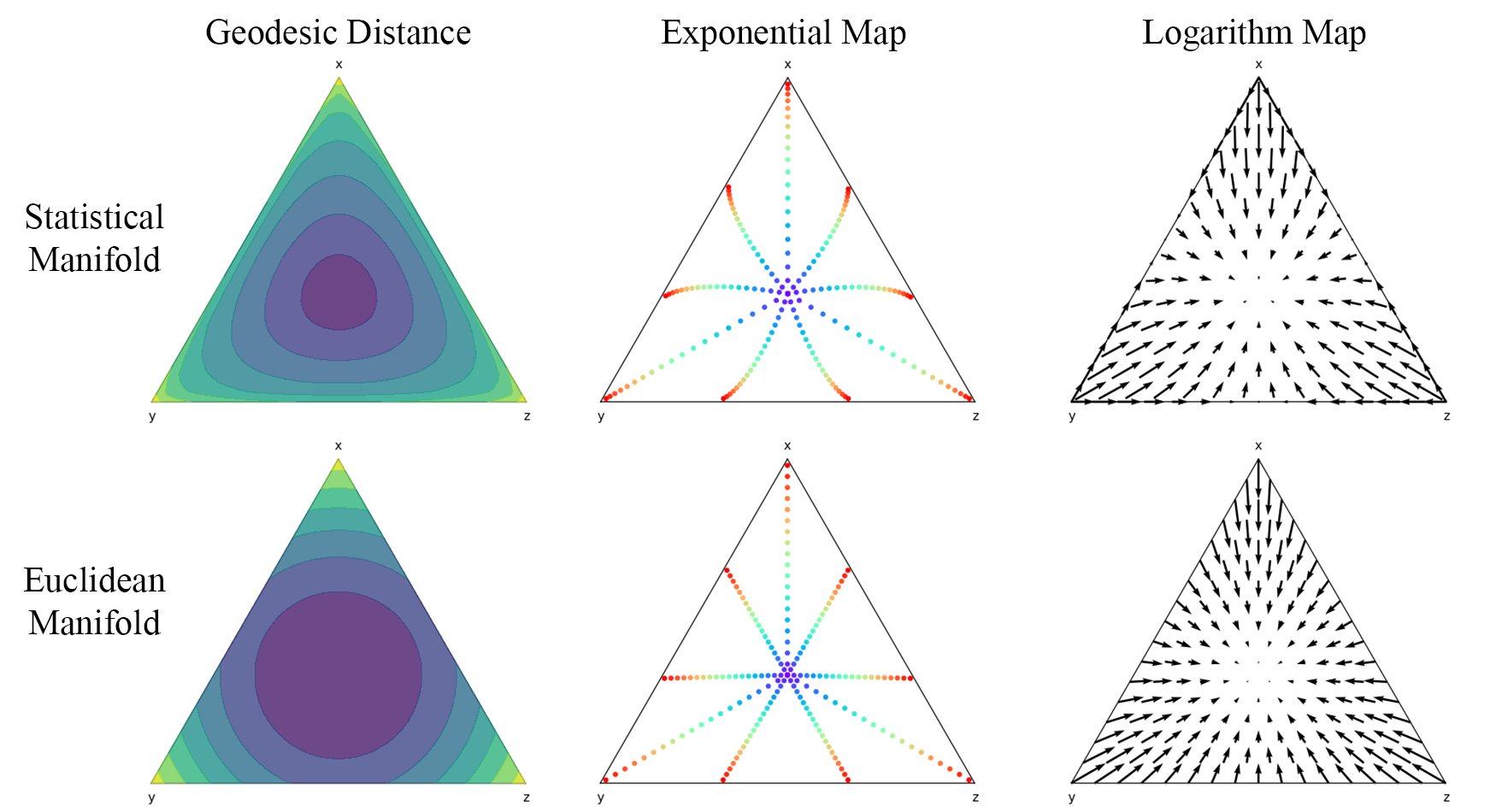}
    \caption{The Riemannian geometry of the statistical manifold for categorical distributions in comparison to Euclidean geometry on the simplex. \textbf{Left}: Contours for the geodesic distances to $\mu_0=(1/3,1/3,1/3)$. \textbf{Middle}: Exponential maps (geodesics) from $\mu_0$ to different points near the boundary. \textbf{Right}: Logarithm maps (vector fields) to $\mu_0$.}
    \label{fig:geo}
\end{figure}
\vspace{-1em}

\subsection{Statistical Flow Matching} \label{sec:sfm}
% \begin{equation}
%     \mathcal{L}=\mathbb{E}_{t\sim U[0,1],\mu_0\sim p_0(\mu),\mu_1\sim q(\mu)}\left[\|v(\mu_t,t)-u^\text{cat}_t(\mu_t|\mu_0,\mu_1)\|_\mu^2\right]
% \end{equation}
We provide the analytical form of the exponential and logarithm maps on the statistical manifold of categorical distributions in Appendix \ref{sec:stat_manifold}. Although it is possible to directly learn the vector field following the loss in Eq.\eqref{eqn:rfm_loss}, such a direct parameterization has numerical issues near the boundary. As described in Sec.\ref{sec:catsm}, we apply the diffeomorphism $\pi$ in Eq.\eqref{eqn:mapping} to derive a more stable training objective on the spherical manifold:
\begin{equation}
    \mathcal{L}_\text{SFM}=\mathbb{E}_{t\sim U[0,1],x_0\sim \pi_*(p_0(\mu)),x_1\sim \pi_*(q(\mu))}\left[\|v(x_t,t)-u^S_t(x_t|x_0,x_1)\|_2^2\right] , \label{eqn:loss_sfm}
\end{equation}
where $p_0$ is the prior noise distribution over $\mathcal{P}$ and $q$ is the data distribution; $\pi_*$ denotes the standard pushforward measure. $v:S^{n-1}_+\times [0,1]\to TS^{n-1}_+$ is a learnable time-dependent vector field network that maps the interpolant $x_t$ on the unit sphere to a tangent vector in the tangent bundle $TS^{n-1}_+$. The ground truth conditional vector field $u^S$ is calculated using the exponential and logarithms maps on the sphere (Appendix \ref{sec:sphere}).
The overall training and inference scheme is visualized in Fig.\ref{fig:model} and described in Alg.\ref{alg:train} and \ref{alg:sample} in Appendix \ref{sec:exp_setup}.

We further implement a Euclidean flow matching model on the probability simplex with linear interpolation between the noises and the target distributions. Though linear flow matching offers ``straight'' lines under the Euclidean assumption, it is unaware of the intrinsic geometric properties of the statistical manifold and turns out to trace longer paths in terms of the Riemannian metric. The objective for linear flow matching can be described as
\begin{equation}
    \mathcal{L}_\text{LinearFM}=\mathbb{E}_{t\sim U[0,1],\mu_0\sim p_0(\mu),\mu_1\sim q(\mu)}\left[\|v(\mu_t,t)-(\mu_1-\mu_0)\|_2^2\right] . \label{eqn:loss_lfm}
\end{equation}

In the discussion above, we assume a single data dimension on the statistical manifold. This can be extended to any data dimension by treating them as independent channels of the input. In practice, each probability measure on the simplex is represented by a matrix $X\in [0,1]^{D\times n}$ where $D$ is the data dimension and each row sums to 1. The priors are independently sampled from the uniform distribution over the $(n-1)$-simplex and each data dimension is independently interpolated with the same timestep $t\sim U[0,1]$. The flow model, on the other hand, takes all the data dimensions as input to capture the dependence between different dimensions.
% We also follow \cite{chen2023riemannian} to manually project the predictions to the corresponding tangent space to make sure that the generations do not leave the manifold.
During sampling, existing ODE solvers and the simple Euler method are used to integrate the vector field through time to obtain the final categorical distributions. Discrete samples are then drawn from the generated categorical distributions for evaluation. Details regarding model parameterization and sampling are further described in Appendix \ref{sec:model_param} and \ref{sec:model_sample}.

%\vspace{-1em}
\begin{figure}[ht]
    \centering
    \includegraphics[width=\linewidth]{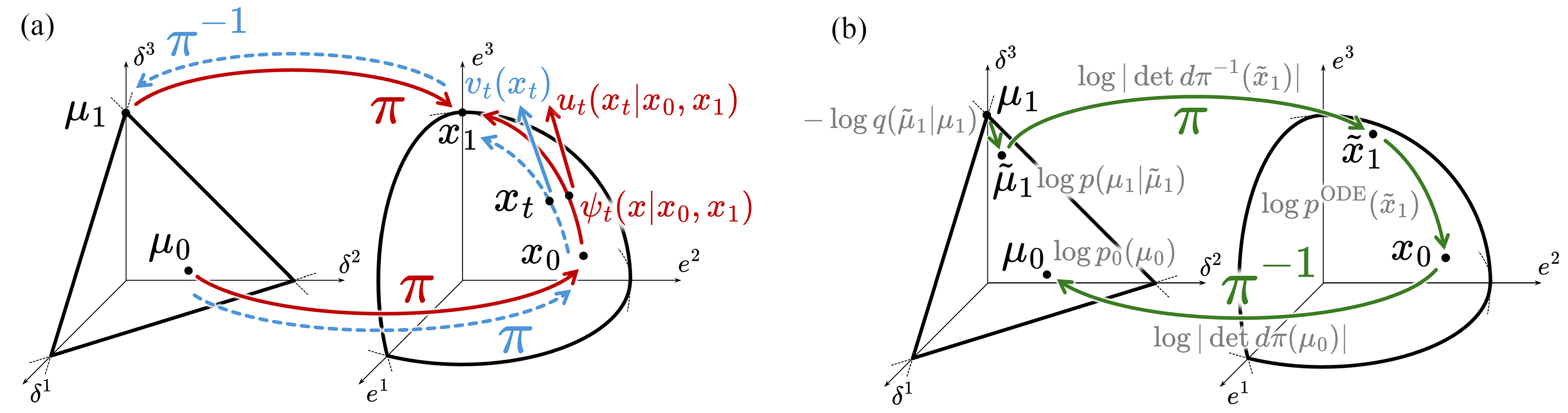}
    \vspace{-1em}
    \caption{Statistical flow matching (SFM) framework. \textbf{(a)} During training (Sec.\ref{sec:sfm}), probability measures on $\mathcal{P}$ are mapped to $\smash{S^{n-1}_+}$ via diffeomorphism $\pi$ to compute the time-dependent vector field (in red). During inference, the learned vector field generates the trajectory on $\smash{S^{n-1}_+}$ and we map the outcome of ODE back to $\mathcal{P}$ (in blue). \textbf{(b)} In the NLL calculation for one-hot examples (Sec.\ref{sec:nll}), the probability density is marginalized over a small neighborhood of some Dirac measure to avoid undefined behaviors at the boundary (in green).}
    \label{fig:model}
\end{figure}
%\vspace{-0.5em}

\subsection{Optimization View of Statistical Flow Matching} \label{sec:optim}
We further provide an interpretation of our proposed statistical flow matching framework from the perspective of an optimization problem. From the \emph{optimization} viewpoint, for a generative model on a statistical manifold, we want to minimize the discrepancy between the target distributions and the generated distributions. Naturally, the Kullback-Leibler divergence $D_\text{KL}(\mu(\theta)\|\mu(\theta_1))$ can be used as a measure of statistical distance. We note that the Fisher information metric can be obtained as the Hessian of the KL divergence with respect to the parameter $\theta$. This can be demonstrated by the fact that the KL divergence reaches the global minimum of 0 at $\theta=\theta_1$, so all first-order partial derivatives are zero, and the Hessian is positive semi-definite. Therefore, Taylor expansion of KL divergence at $\theta_1$ with $\Delta\theta=\theta-\theta_1$ gives
\begin{equation}
\begin{aligned}
    D_\text{KL}(\mu(\theta)\|\mu(\theta_1))&\approx \frac{1}{2}\sum_{jk}\Delta \theta_j \Delta \theta_k \left.\frac{\partial^2 }{\partial \theta_j\partial \theta_k}D_\text{KL}(\mu(\theta)\|\mu(\theta_1))\right|_{\theta=\theta_1}\\
    &=\frac{1}{2}\sum_{jk}\Delta \theta_j \Delta \theta_k g_{jk}(\theta_1)=\frac{1}{2}\|\Delta\theta\|_g^2 .
\end{aligned}
\end{equation}

From the \emph{geometric} viewpoint, the geodesic, by definition, is a (locally) length-minimizing curve with respect to the corresponding Riemannian metric. Therefore, by following the direction of the vector field that decreases the geodesic element $\diff s^2=\|\diff\theta\|^2_g\approx \|\Delta\theta\|^2_g$, we are indeed following the steepest direction that minimizes the local KL divergence. In this sense, the Fisher information metric defines a second-order optimization scheme for the KL divergence by following the ``steepest'' direction of the Hessian. 
Indeed, The steepest direction $\Delta\theta$ that decreases the KL divergence is known as the \emph{natural gradient} in the existing literature \cite{amari1998natural,amari1998adaptive} and has been explored in optimization known as \emph{natural gradient descent}\cite{park2000adaptive,martens2020new,nurbekyan2023efficient}. Instead of optimizing along the normal gradient, the natural gradient is defined as $\Tilde{\nabla}\mathcal{L}=F^{-1}\nabla\mathcal{L}$ where $F$ is the Fisher information matrix estimated from a batch of sampled data. Results established in \cite{amari1998natural,martens2020new} demonstrated that stochastic natural gradient descent is asymptotically ``Fisher efficient'' and is indeed the steepest descent direction in the manifold of distributions where distance is measured in small local neighborhoods by the KL divergence. Following the geodesic defined by the Fisher information metric, our SFM framework also shares these benefits with an additional advantage of analytical expressions for geodesics, as we focus on the family of categorical distributions instead of general statistical models. Such a theoretical connection may contribute to the better performance of SFM.

\subsection{Optimal Transport on Statistical Manifold}
The geometric viewpoint of the statistical manifold offers a continuous and differentiable formulation of generative modeling and also provides a robust way to measure the distance between two categorical distributions via the well-defined geodesic distance in Eq.\eqref{eqn:d_cat}. In contrast, the Markov chain-based methods cannot establish a robust distance measure due to the stochastic jumps between discrete states. Inspired by the optimal transport formulation in previous work \cite{nguyen2022improving,fatras2021minibatch,yim2023fast,tong2023improving}, we naturally extend it to our statistical setting in which the cost is defined by averaging the statistical distances over data dimensions as $d_\text{cat}(X,Y)=\frac{1}{D}\sum_{k=1}^D d_\text{cat}(\mu^{(k)},\nu^{(k)})$ for $X=\{\mu^{(k)}\}_{k=1}^D,Y=\{\nu^{(k)}\}_{k=1}^D$. An optimal assignment of the initial noises to the target data distributions can potentially lead to more efficient training, which we demonstrated empirically in our ablation studies.

\subsection{Exact Likelihood Calculation} \label{sec:nll}
Unlike diffusion-based models which rely on variational lower bounds for likelihood estimation, our proposed method shares the continuous normalizing flow's capability of exact likelihood calculation. For an arbitrary test sample $x\in\mathcal{M}$, using the change of measure formula, the likelihood can be modeled by the continuity equation \cite{chen2023riemannian,mathieu2020riemannian}:
\begin{equation}
    \frac{\partial}{\partial t}\log p_t(x_t)+\mathrm{div}_g(v_t)(x_t)=0 ,
\end{equation}
where $\mathrm{div}_g$ is the Riemannian divergence and $v_t(x_t):=v(x_t,t)$ is the time-dependent vector field. In this way, the pushforward probability measures $p_t(x_t)$ defined via the learned flow can be obtained as the integral of the Riemannian divergence back through time on the simulated trajectory $x_t$. Following \cite{ben2022matching,avdeyev2023dirichlet}, we define the ODE log-likelihood as the change of the log-likelihood as:
\begin{equation}
    \log p^\text{ODE}(x_1)=\int_1^0 \mathrm{div}_g(v_t)(x_t) \diff t , \label{eqn:ode}
\end{equation}
where the trajectory $x_t$ is obtained by solving the differential equation $\frac{\partial}{\partial t}x_t=v_t(x_t)$ reverse through time with the initial condition $x_1$ at $t=1$. In this way, we have $\log p(x_1)=\log p^\text{ODE} + \log p_0(x_0)$ where $\log p_0(x_0)$ is the log-likehood of the prior distribution at data point $x_0$. In practice, we follow previous work \cite{avdeyev2023dirichlet} to use Hutchinson's trace estimator \cite{hutchinson1989stochastic} to efficiently obtain an unbiased estimation of the divergence using standard normal random variables. To further account for the transform $\pi$ from $\mathcal{P}$ to $S^{n-1}_+$ and the reverse transform $\pi^{-1}$, additional change of measure formulae need to be applied. Consider the pushforward of the probability measure $P$ over $\mathcal{P}$ under the diffeomorphism $\pi$ defined by $\pi_*P(V):=P(\pi^{-1}V)$, we have the change of measure identity $\pi_*P(\pi(\mu))|\det \diff\pi(\mu)|=P(x)$ for $x=\pi(\mu)$. Therefore, by adding the two log-determinant of the pushforward measures, the log-likelihood can be formulated as
\begin{equation}
    \log p_1(\mu_1)= \log |\det \diff\pi^{-1}(x_1)| + \log p^\text{ODE}(x_1) + \log |\det \diff\pi(\mu_0)| + \log p_0(\mu_0) . \label{eqn:nll}
\end{equation}

The above formula is well-defined for all interior points of $\mathcal{P}$, but the change of measure terms are undefined on the boundary. This can be understood as the fact that discrete likelihoods can be arbitrarily high \cite{theis2015note}. Following \cite{avdeyev2023dirichlet}, we derive a variational lower bound for the likelihood as the marginalized probability over the small neighborhood of a Dirac measure $\mu_1=\delta$ at the boundary:
\begin{equation}
    \log p(\delta)\ge \mathbb{E}_{q(\Tilde{\mu}_1|\delta)}[-\log q(\Tilde{\mu}_1|\delta)+\log p(\delta|\Tilde{\mu}_1)+\log p_1(\Tilde{\mu}_1)] , \label{eqn:elbo}
\end{equation}
where $q(\Tilde{\mu}_1|\delta)$ can be viewed as the forward noising probability at $\Tilde{\mu}_1$ which is close to $\delta$ with a closed-form likelihood. $p(\delta|\Tilde{\mu}_1)$ is the categorical likelihood (cross-entropy) and $p_1(\Tilde{\mu}_1)$ is the model likelihood in Eq.\eqref{eqn:nll}. The overall workflow of calculating NLL is demonstrated in Fig.\ref{fig:model}, and more details regarding likelihood computation can be found in Appendix \ref{sec:nll_cal}. It is worth mentioning that the continuous likelihood calculated here is defined with respect to the probability distribution over the statistical manifold $\mathcal{P}$. In contrast, the bits-per-dimension score for autoregressive models is usually defined with respect to a specific categorical distribution on $\mathcal{P}$ and therefore not comparable, as was also noted in \cite{avdeyev2023dirichlet}.

\section{Experiments}
%\vspace{-0.25em}
Our proposed SFM framework can be leveraged to approximate arbitrary distribution over $\mathcal{P}$, i.e., any distribution over the parameterized family of probability measures. We first demonstrate this with the toy example of a Swiss roll distribution on the probability simplex. We then conduct extensive experiments on real-world datasets for discrete generation across various domains including computer vision, natural language processing, and bioinformatics to demonstrate the effectiveness of our proposed model. We train and evaluate two different versions of the SFM framework: \textbf{SFM w/ OT} for our proposed model with optimal transport during training and \textbf{SFM w/o OT} for the model without optimal transport. A naive approach that directly works with the exponential and logarithm maps on the statistical manifold without the diffeomorphism (\textbf{SFM w/o $\boldsymbol{\pi}$}) is also compared in the toy example. We also implement a linear flow matching model on the probability simplex using the loss in Eq.\eqref{eqn:loss_lfm} as an additional baseline, for which we dub \textbf{LinearFM}. For discrete data, we always use Eq.\eqref{eqn:elbo} to obtain finite negative log-likelihood (NLL). More details regarding model architectures and the experimental setup can be found in Appendix \ref{sec:exp_setup}.

\subsection{Toy Example: Swiss Roll on Simplex}
%\vspace{-0.25em}
We project the Swiss roll dataset onto the 2-simplex with some additional margins to make sure no point resides on the boundary. We used a time-dependent MLP to model the vector field for all models. The generative points on the simplex and NLL are calculated using the Dopri5 ODE solver \cite{DORMAND198019} and are shown in Fig.\ref{fig:swissroll}.

\begin{figure}[ht]
    \centering
    \vspace{-.5em}
    \includegraphics[width=0.8\linewidth]{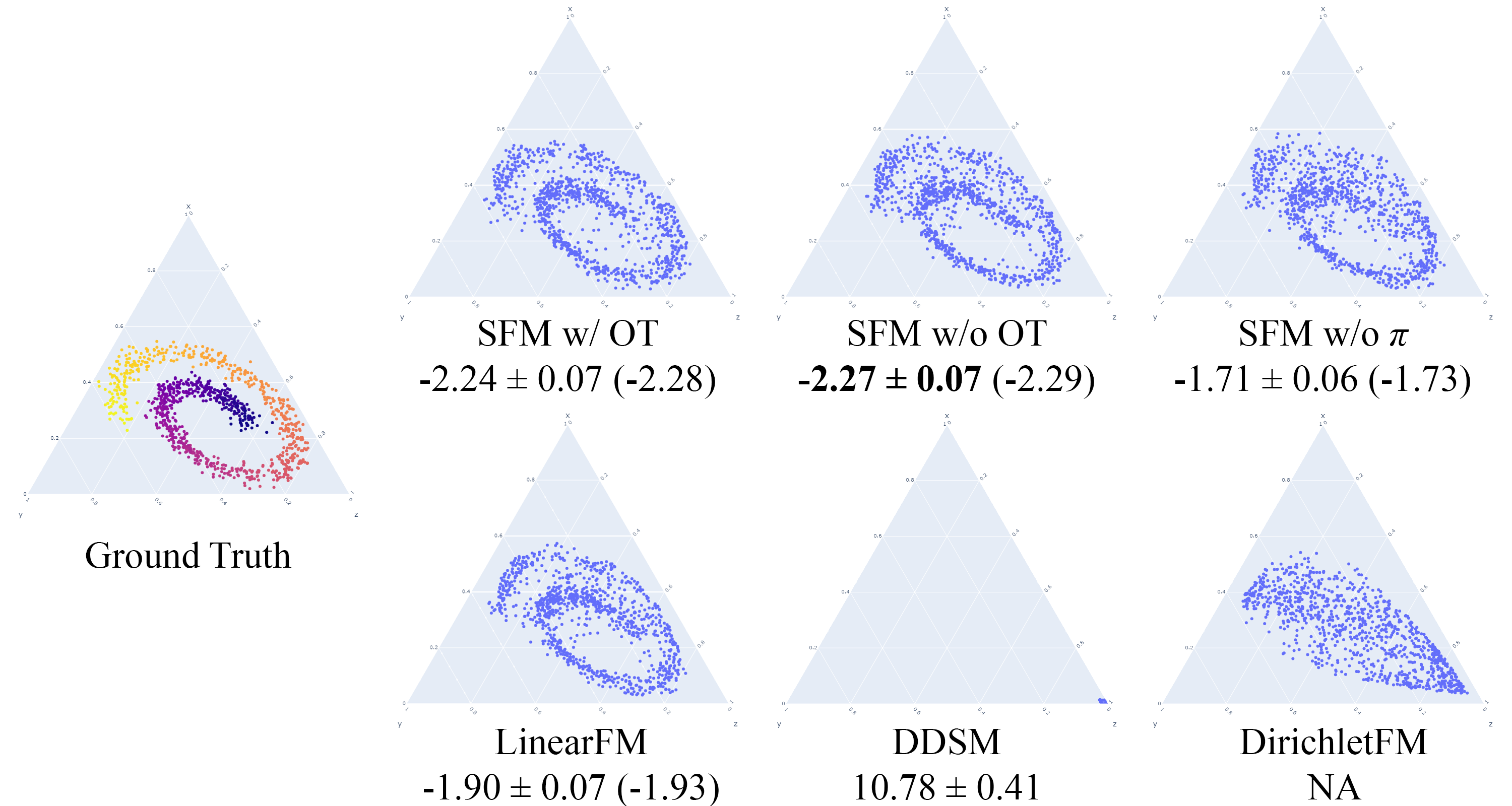}
    \caption{Generated samples of the Swiss roll on simplex dataset and NLL (lower is better). The NLLs are estimated using Hutchinson's trace estimator, whereas those in the parenthesis are exact.}
    \label{fig:swissroll}
\end{figure}

We note that most existing models assume the target data to be Dirac measures (one-hot distributions) and cannot be applied to this task. Both \textbf{DDSM} \cite{avdeyev2023dirichlet} and \textbf{DirichletFM} \cite{stark2024dirichlet} imposed strong prior assumptions on the data distribution as Dirichlet distributions, which failed to capture the complex geometric shape of the Swiss roll data points as demonstrated in the generated samples. On the contrary, directly built upon the geometry of the statistical manifold, our SFM model successfully captured the detailed data geometry. As no data resides on the boundary, exact NLL can be obtained using Eq.\eqref{eqn:nll} and was averaged over all points in the dataset. Though linear flow matching could also capture the geometry of the data, our SFM outperformed it in terms of NLL.

\subsection{Binarized MNIST}
%\vspace{-0.25em}
We extend our SFM to model discrete generation tasks in computer vision. The binarized MNIST dataset \cite{salakhutdinov2008quantitative} is the binarized version of the original MNIST dataset \cite{lecun2010mnist} by thresholding the original continuous value to be either 0 or 1, thus can be viewed as a 2-class generation task with a data dimension of $28^2=784$. We also compared \textbf{D3PM} \cite{austin2021structured} and \textbf{DFM} \cite{gat2024discrete} as examples of masked diffusion models for discrete generation. We used a convolutional neural net adopted from \cite{song2020improved} with additional additive Fourier-based time embeddings. The quantitative evaluation of NLL and Fréchet inception distance (FID) are shown in Tab.\ref{tab:bmnist}. 

\begin{table}[ht]
\centering
\caption{NLL and FID of different discrete models on binarized MNIST. The NLLs in the parenthesis are discrete NLLs; therefore, they are not directly comparable. * is from \cite{avdeyev2023dirichlet}.} \label{tab:bmnist}
% \resizebox{\textwidth}{!}{
\begin{tabular}{@{}lcccc@{}}
\toprule
Model & SFM w/ OT & SFM w/o OT & SFM w/o $\pi$ & LinearFM \\ \midrule
NLL$\downarrow$ & \textbf{-1.687 $\pm$ 0.020} & -1.631 $\pm$ 0.027 & 0.928 $\pm$ 0.035 & 0.622 $\pm$ 0.022 \\
FID$\downarrow$ & \textbf{4.62} & 5.15 & 8.2073 & 5.91 \\ \midrule
Model & DirichletFM & DDSM & D3PM & DFM \\ \midrule
NLL$\downarrow$ & NA & 0.100 $\pm$ 0.001\textsuperscript{*} & (0.141 $\pm$ 0.021) & (0.101 $\pm$ 0.017) \\
FID$\downarrow$ & 77.35 & 7.79 & 67.36 & 15.34 \\ \bottomrule
\end{tabular}
% }
\end{table}

All the generated results and NLL calculations were based on the Dopri5 ODE solver. Additional ablation results can be found in Appendix~\ref{sec:ablation} and generated images can be found in Appendix~\ref{sec:generated}. The NLL for diffusion-based models (D3PM and DDSM) was also calculated based on the variational bounds derived in their papers.
%Though DirichletFM was also built upon conditional flow matching, its formation restricts the ability for exact likelihood calculation (see discussion in Appendix \ref{sec:other_stat}).
Our proposed model consistently outperformed both the linear flow matching and other diffusion baselines in terms of FID and NLL, achieving higher sample quality and likelihood.

\subsection{Text8}
The Text8 dataset \cite{mahoney2011large} is a medium-size character-level corpus consisting of a small vocabulary of 27, which includes the 26 lowercase letters and the whitespace token. We followed the convention in previous work \cite{austin2021structured,graves2023bayesian} to use random chunks of length 256 in both training and evaluation without any preprocessing. We used a 12-layer diffusion transformer (DiT) \cite{peebles2023scalable} based predictor, similar to the one used in \cite{lou2023discrete}. As our exact likelihood is not directly comparable to bits-per-character (BPC) reported in previous work which relies on an alternative variational bound for the likelihood, we additionally calculate such BPC inspired by \cite{sahoo2024simple}. See Appendix \ref{sec:bpc} for additional details. All generated samples and NLL were estimated using the Dopri5 ODE solver. Quantitative results of BPC are provided in Tab.\ref{tab:text8} and generated texts are provided in Appendix \ref{sec:generated}. The results for the autoregressive language models are also provided as a reference (Discrete Flow \cite{tran2019discrete} is based on autoregressive normalizing flow).

Note that, unlike all of the other baselines, SFM and LinearFM do not directly optimize such a BPC as a training objective, so such an evaluation metric is unfavorable for our model (see Appendix~\ref{sec:bpc}). Despite such an unfavorable evaluation, our proposed SFM still achieved the second-best performance among other diffusion and flow baselines that were directly trained with cross-entropy losses. We also noted a significant performance gap between SFM and the naive linear flow matching on simplex, highlighting the importance of capturing the intrinsic geometric properties of the underlying statistical manifold. Optimal transport does not significantly affect the final performance here, which we presume might be due to the long training stage in which vector fields were well-explored.
Additional evaluation following \cite{campbell2024generative} is provided in Appendix \ref{sec:gpt_nll}, in which SFM achieved the best generation entropy as evaluated by the pre-trained GPT-J-6B model \cite{gpt-j}.

\begin{table}[ht]
\centering
\begin{minipage}{.42\textwidth}
    \centering
    \caption{BPC on Text8. Results marked \textsuperscript{*} are taken from the corresponding papers.}
     \label{tab:text8}
    \begin{tabular}{@{}lc@{}}
    \toprule
    Model          & BPC$\downarrow$ \\ \midrule
    SFM w/ OT      & 1.399 $\pm$ 0.020 \\
    SFM w/o OT     & 1.386 $\pm$ 0.033 \\
    LinearFM       & 1.651 $\pm$ 0.027  \\ \hline
    D3PM-absorb\cite{austin2021structured}    & 1.47\textsuperscript{*}        \\
    BFN\cite{graves2023bayesian}           & 1.41\textsuperscript{*}        \\
    SEDD-absorb\cite{lou2023discrete}    & \textbf{1.32}\textsuperscript{*}        \\
    MultiFlow\cite{campbell2024generative}    & 1.41\textsuperscript{*}        \\
    Argmax Flow\cite{hoogeboom2021argmax}    & 1.80\textsuperscript{*}        \\ \midrule
    Discrete Flow\cite{tran2019discrete}  & 1.23\textsuperscript{*}        \\
    Transformer\cite{austin2021structured}    & 1.23\textsuperscript{*}        \\
    Transformer XL\cite{dai2019transformer} & \textbf{1.08}\textsuperscript{*}        \\ \bottomrule
    \end{tabular}
\end{minipage}%
\hfill
\begin{minipage}{.45\textwidth}
    \centering
    \captionof{figure}{SP-MSE (as evaluated by Sei \cite{chen2022sequence}) on the generated promoter DNA sequences. Results marked \textsuperscript{*} are from \cite{avdeyev2023dirichlet} and results marked \textsuperscript{\textdagger} are from \cite{stark2024dirichlet}.}
     \label{tab:promoter}
    \begin{tabular}{@{}lc@{}}
    \toprule
    Model                   & SP-MSE$\downarrow$ \\ \midrule
    SFM w/ OT               & 0.0279 \\
    SFM w/o OT              & \textbf{0.0258} \\ \midrule
    LinearFM                     & 0.0282 \\
    DDSM                    & 0.0334\textsuperscript{*} \\
    D3PM-uniform            & 0.0375\textsuperscript{*} \\
    Bit-Diffusion (one-hot) \cite{chen2022analog} & 0.0395\textsuperscript{*} \\
    Bit-Diffusion (bit)  \cite{chen2022analog}    & 0.0414\textsuperscript{*} \\
    Language Model          & 0.0333\textsuperscript{\textdagger} \\
    DirichletFM                     & 0.0269\textsuperscript{\textdagger}\\ \bottomrule
    \end{tabular}
\end{minipage}
\end{table}

\subsection{Promoter Design}
We further apply SFM to the practical task of promoter DNA sequence design in the bioinformatics realm. The promoter is a key element in gene transcription and regulation, and the generation of desired promoters can better help us understand the intricate interactions between human genes.
\cite{avdeyev2023dirichlet} proposed a human promoter sequence dataset containing 100k promoter sequences with the corresponding transcription initiation signal profiles. Each promoter sequence is 1024 base pairs long and is centered at the annotated transcription start site position \cite{hon2017atlas}. Therefore, we can model promoter design as a generation task with 4 categories (the base pair ATGC) conditioned on the given transcription signals. We follow \cite{avdeyev2023dirichlet} to concatenate the signal as additional input to the vector field predictor built upon 20 stacks of 1-dimensional convolutional layers on the input sequence. Optimal transport can also be applied for conditional generation, as we can pair the conditions with the input to make sure that the conditional arguments align with the target one-hot distributions.

% \begin{wrapfigure}{r}{0.43\textwidth} 
% \vspace{-2em}
% \begin{minipage}{0.43\textwidth}
% \input{tabs/promoter}    
% \end{minipage}
% \end{wrapfigure}
To provide a quantitative evaluation of the generated promoter sequences, we follow \cite{avdeyev2023dirichlet} to apply the pre-trained deep learning sequence model Sei \cite{chen2022sequence} to predict active promoters based on predictions of the chromatin mark H3K4me3. As the dataset spans the whole range of human promoter activity levels from highly expressed promoters to those with very low expression, the evaluation is based on comparing the scores between the generated sequences and the ground truth human genome promoter sequences on the test chromosomes. The metric SP-MSE is the mean squared error between the predicted promoter activity of the generated sequences and human genome sequences (lower is better) and is demonstrated in Tab.\ref{tab:promoter}. Our SFM was able to achieve the lowest SP-MSE score compared with other baselines. It is worth noting, though, that optimal transport produced slightly sub-optimal results. We hypothesize that this is because, in conditional generative tasks, the final generation should rely heavily on the conditions. Simply matching inputs with the targets using distance measures may discard important information in the conditional arguments and may not be the best practice.

%\vspace{-1em}
\section{Related Work}
%\vspace{-0.1in}
As we put a special interest in discrete generation, we start with the existing work on discrete generation. We first list a few traditional autoregressive models \cite{dai2019transformer,radford2019language,tran2019discrete} and masked language models \cite{devlin2018bert,salazar2019masked} but will skip them as we mainly focus on the diffusion and flow matching models. Existing diffusion and flow-based discrete generative models can be categorized into two main groups. The first group relies on stochastic jumps of Markov chains by treating discrete classes as Markov states. By choosing a proper transition matrix, the target one-hot distribution can be gradually noised into the stationary distribution. Noticeably, this approach can also adopt an additional absorbing state to mimic the mask token in masked language modeling \cite{austin2021structured}. D3PM \cite{austin2021structured} adapted the discrete-time Markov chain with the diffusion framework and SEDD \cite{lou2023discrete} proposed a more efficient training scheme by learning the score entropy between different states. \cite{campbell2022continuous,santos2023blackout} extended it to the continuous-time Markov chain by using the infinitesimal generator (rate matrix) instead, and \cite{campbell2024generative,gat2024discrete} further applied the flow matching framework. Although the transition matrix or the rate matrix determines the entire diffusion trajectory, there is no canonical way of choosing it for different generation tasks to control the mixing rate. Also, due to the presence of discrete jumps of the Markov chain, exact likelihood calculation is no longer feasible. Only variational bounds were derived in \cite{austin2021structured,campbell2024generative}. The other group directly works with the probability simplex or the logit space with certain assumptions of the underlying geometric structure \cite{li2024full}. As an example, our linear flow matching assumes a Euclidean geometry on the probability simplex and is often used as a baseline in previous work \cite{stark2024dirichlet}. The multinomial diffusion \cite{hoogeboom2021argmax} assumed a Euclidean geometry on the logits space so interpolation became multiplication back on the probability simplex. Dirichlet diffusion (DDSM) \cite{avdeyev2023dirichlet} and Dirichlet flow matching (DirichletFM) \cite{stark2024dirichlet} exerted a strong prior assumption on the probability path as the Jacobi diffusion process. We provide a more detailed analysis of these models in Appendix \ref{sec:other_stat}. For models that directly operate on the logit space, the Bayesian flow network (BFN) assumed Gaussian distributions on the logit space with Bayesian update rules. \cite{mahabadi2023tess} also assumed a Euclidean geometry in the logit space with targets as soft one-hot logits. The assumptions made in these models may not always hold, e.g., for the Swiss roll on simplex dataset in Fig.\ref{fig:swissroll}. These assumptions also did not necessarily capture the true geometry of the underlying statistical manifold. In contrast, our proposed SFM framework does not exert any strong prior assumptions and is aware of the intrinsic geometry by following the geodesics.

We also briefly review the related work on statistical manifolds and information geometry. The field of information geometry, the study of geometric properties of statistical manifolds, was first established in Rao's seminal paper in 1945 \cite{rao1992information} discussing the Fisher information metric. Most previous work utilizing information geometry focuses on optimization \cite{amari1998natural,amari1998adaptive} with a few exceptions on representation learning. \cite{vaddi2022autonomous} utilized the geodesic distance between two univariate Gaussians for function shape matching for retrosynthesis of gold nanoparticles. \cite{lee2022statistical} treated point cloud data as probability measures over the 3D Euclidean space and considered the pullback metric on the latent space to obtain optimal latent coordinates for the autoencoder. \cite{palma2023modelling} further applied such a method on single-cell RNA sequence trajectories. These models demonstrated the advantage of following the proper geometry compared with the vanilla Euclidean setting, but the pullback metric needed to be evaluated with expensive Jacobian-vector products during training. In contrast, our proposed SFM, for the first time, leverages mathematical tools from information geometry to generative modeling. SFM directly operates on the statistical manifold with closed-form geodesics, providing a simulation-free approach for efficient generative training.

% \vspace{-0.5em}
\section{Discussion} \label{sec:limit}
In this paper, we proposed statistical flow matching (SFM) as a general generative framework for generative modeling on the statistical manifold of probability measures. By leveraging results from information geometry, our SFM effectively captures the underlying intrinsic geometric properties of the statistical manifold. Specifically, we focused on the statistical manifold of categorical distributions in this work and derived optimal transport and exact likelihood formulae. We applied SFM to diverse downstream discrete generation tasks across different domains to demonstrate our framework's effectiveness over the baselines. We noted that SFM can be further extended to non-discrete generative tasks whose targets are probability distributions, which we will leave as future work.

We are also aware of the limitations of our SFM framework. As a special case of the flow matching model, the generation is an iterative process of refinement that cannot modify the size of the initial input. This may pose limitations to generation compared with autoregressive models. Furthermore, we adopted the assumption of independence between classes so that the canonical Riemannian structure can be induced by the Fisher metric. However, discretized data like CIFAR-10 \cite{krizhevsky2009learning} (256 ordinal pixel values) do not follow this assumption, and results on such data might be suboptimal.

%%%%%%%%% REFERENCES
\bibliographystyle{abbrv}
\bibliography{ref}

@article{miyamoto2023closed,
  title={On closed-form expressions for the fisher-rao distance},
  author={Miyamoto, Henrique K and Meneghetti, F{\'a}bio CC and Costa, Sueli IR},
  journal={arXiv preprint arXiv:2304.14885},
  year={2023}
}

@book{ay2017information,
  title={Information geometry},
  author={Ay, Nihat and Jost, J{\"u}rgen and V{\^a}n L{\^e}, H{\^o}ng and Schwachh{\"o}fer, Lorenz},
  volume={64},
  year={2017},
  publisher={Springer}
}

@book{amari2000methods,
  title={Methods of information geometry},
  author={Amari, Shun-ichi and Nagaoka, Hiroshi},
  volume={191},
  year={2000},
  publisher={American Mathematical Soc.}
}

@article{atkinson1981rao,
  title={Rao's distance measure},
  author={Atkinson, Colin and Mitchell, Ann FS},
  journal={Sankhy{\=a}: The Indian Journal of Statistics, Series A},
  pages={345--365},
  year={1981},
  publisher={JSTOR}
}

@incollection{rao1992information,
  title={Information and the accuracy attainable in the estimation of statistical parameters},
  author={Rao, C Radhakrishna},
  booktitle={Breakthroughs in Statistics: Foundations and basic theory},
  pages={235--247},
  year={1992},
  publisher={Springer}
}

@article{amari1998natural,
  title={Natural gradient works efficiently in learning},
  author={Amari, Shun-Ichi},
  journal={Neural computation},
  volume={10},
  number={2},
  pages={251--276},
  year={1998},
  publisher={MIT Press}
}

@article{amari1998adaptive,
  title={Adaptive blind signal processing-neural network approaches},
  author={Amari, Shun-ichi and Cichocki, Andrzej},
  journal={Proceedings of the IEEE},
  volume={86},
  number={10},
  pages={2026--2048},
  year={1998},
  publisher={IEEE}
}

@article{park2000adaptive,
  title={Adaptive natural gradient learning algorithms for various stochastic models},
  author={Park, Hyeyoung and Amari, S-I and Fukumizu, Kenji},
  journal={Neural Networks},
  volume={13},
  number={7},
  pages={755--764},
  year={2000},
  publisher={Elsevier}
}

@article{martens2020new,
  title={New insights and perspectives on the natural gradient method},
  author={Martens, James},
  journal={Journal of Machine Learning Research},
  volume={21},
  number={146},
  pages={1--76},
  year={2020}
}

@article{nurbekyan2023efficient,
  title={Efficient natural gradient descent methods for large-scale PDE-based optimization problems},
  author={Nurbekyan, Levon and Lei, Wanzhou and Yang, Yunan},
  journal={SIAM Journal on Scientific Computing},
  volume={45},
  number={4},
  pages={A1621--A1655},
  year={2023},
  publisher={SIAM}
}

@article{vaddi2022autonomous,
  title={Autonomous retrosynthesis of gold nanoparticles via spectral shape matching},
  author={Vaddi, Kiran and Chiang, Huat Thart and Pozzo, Lilo D},
  journal={Digital Discovery},
  volume={1},
  number={4},
  pages={502--510},
  year={2022},
  publisher={Royal Society of Chemistry}
}

@inproceedings{palma2023modelling,
  title={Modelling single-cell RNA-seq trajectories on a flat statistical manifold},
  author={Palma, Alessandro and Rybakov, Sergei and Hetzel, Leon and Theis, Fabian J},
  booktitle={NeurIPS 2023 AI for Science Workshop},
  year={2023}
}

@inproceedings{lee2022statistical,
  title={A statistical manifold framework for point cloud data},
  author={Lee, Yonghyeon and Kim, Seungyeon and Choi, Jinwon and Park, Frank},
  booktitle={International Conference on Machine Learning},
  pages={12378--12402},
  year={2022},
  organization={PMLR}
}

@book{gallot2004riemannian,
  title={Riemannian Geometry},
  author={Gallot, S. and Hulin, D. and Lafontaine, J.},
  isbn={9783540204930},
  lccn={90022646},
  series={Universitext},
  url={https://books.google.com/books?id=6F4Umpws_gUC},
  year={2004},
  publisher={Springer Berlin Heidelberg}
}

@article{lipman2022flow,
  title={Flow matching for generative modeling},
  author={Lipman, Yaron and Chen, Ricky TQ and Ben-Hamu, Heli and Nickel, Maximilian and Le, Matt},
  journal={arXiv preprint arXiv:2210.02747},
  year={2022}
}

@article{chen2023riemannian,
  title={Riemannian flow matching on general geometries},
  author={Chen, Ricky TQ and Lipman, Yaron},
  journal={arXiv preprint arXiv:2302.03660},
  year={2023}
}

@article{mathieu2020riemannian,
  title={Riemannian continuous normalizing flows},
  author={Mathieu, Emile and Nickel, Maximilian},
  journal={Advances in Neural Information Processing Systems},
  volume={33},
  pages={2503--2515},
  year={2020}
}

@article{ben2022matching,
  title={Matching normalizing flows and probability paths on manifolds},
  author={Ben-Hamu, Heli and Cohen, Samuel and Bose, Joey and Amos, Brandon and Grover, Aditya and Nickel, Maximilian and Chen, Ricky TQ and Lipman, Yaron},
  journal={arXiv preprint arXiv:2207.04711},
  year={2022}
}

@inproceedings{nguyen2022improving,
  title={Improving mini-batch optimal transport via partial transportation},
  author={Nguyen, Khai and Nguyen, Dang and Pham, Tung and Ho, Nhat and others},
  booktitle={International Conference on Machine Learning},
  pages={16656--16690},
  year={2022},
  organization={PMLR}
}

@article{fatras2021minibatch,
  title={Minibatch optimal transport distances; analysis and applications},
  author={Fatras, Kilian and Zine, Younes and Majewski, Szymon and Flamary, R{\'e}mi and Gribonval, R{\'e}mi and Courty, Nicolas},
  journal={arXiv preprint arXiv:2101.01792},
  year={2021}
}

@article{song2020improved,
  title={Improved techniques for training score-based generative models},
  author={Song, Yang and Ermon, Stefano},
  journal={Advances in neural information processing systems},
  volume={33},
  pages={12438--12448},
  year={2020}
}

@article{dao2023flow,
  title={Flow matching in latent space},
  author={Dao, Quan and Phung, Hao and Nguyen, Binh and Tran, Anh},
  journal={arXiv preprint arXiv:2307.08698},
  year={2023}
}

@inproceedings{hu2024latent,
  title={Latent space editing in transformer-based flow matching},
  author={Hu, Vincent Tao and Zhang, Wei and Tang, Meng and Mettes, Pascal and Zhao, Deli and Snoek, Cees},
  booktitle={Proceedings of the AAAI Conference on Artificial Intelligence},
  volume={38},
  number={3},
  pages={2247--2255},
  year={2024}
}

@article{yim2023fast,
  title={Fast protein backbone generation with SE (3) flow matching},
  author={Yim, Jason and Campbell, Andrew and Foong, Andrew YK and Gastegger, Michael and Jim{\'e}nez-Luna, Jos{\'e} and Lewis, Sarah and Satorras, Victor Garcia and Veeling, Bastiaan S and Barzilay, Regina and Jaakkola, Tommi and others},
  journal={arXiv preprint arXiv:2310.05297},
  year={2023}
}

@article{bose2023se,
  title={Se (3)-stochastic flow matching for protein backbone generation},
  author={Bose, Avishek Joey and Akhound-Sadegh, Tara and Fatras, Kilian and Huguet, Guillaume and Rector-Brooks, Jarrid and Liu, Cheng-Hao and Nica, Andrei Cristian and Korablyov, Maksym and Bronstein, Michael and Tong, Alexander},
  journal={arXiv preprint arXiv:2310.02391},
  year={2023}
}

@inproceedings{stark2023harmonic,
  title={Harmonic prior self-conditioned flow matching for multi-ligand docking and binding site design},
  author={Stark, Hannes and Jing, Bowen and Barzilay, Regina and Jaakkola, Tommi},
  booktitle={NeurIPS 2023 AI for Science Workshop},
  year={2023}
}

@article{song2024equivariant,
  title={Equivariant Flow Matching with Hybrid Probability Transport for 3D Molecule Generation},
  author={Song, Yuxuan and Gong, Jingjing and Xu, Minkai and Cao, Ziyao and Lan, Yanyan and Ermon, Stefano and Zhou, Hao and Ma, Wei-Ying},
  journal={Advances in Neural Information Processing Systems},
  volume={36},
  year={2024}
}

@article{klein2024equivariant,
  title={Equivariant flow matching},
  author={Klein, Leon and Kr{\"a}mer, Andreas and No{\'e}, Frank},
  journal={Advances in Neural Information Processing Systems},
  volume={36},
  year={2024}
}

@article{song2019generative,
  title={Generative modeling by estimating gradients of the data distribution},
  author={Song, Yang and Ermon, Stefano},
  journal={Advances in neural information processing systems},
  volume={32},
  year={2019}
}

@article{ho2020denoising,
  title={Denoising diffusion probabilistic models},
  author={Ho, Jonathan and Jain, Ajay and Abbeel, Pieter},
  journal={Advances in neural information processing systems},
  volume={33},
  pages={6840--6851},
  year={2020}
}

@inproceedings{avdeyev2023dirichlet,
  title={Dirichlet diffusion score model for biological sequence generation},
  author={Avdeyev, Pavel and Shi, Chenlai and Tan, Yuhao and Dudnyk, Kseniia and Zhou, Jian},
  booktitle={International Conference on Machine Learning},
  pages={1276--1301},
  year={2023},
  organization={PMLR}
}

@article{stark2024dirichlet,
  title={Dirichlet Flow Matching with Applications to DNA Sequence Design},
  author={Stark, Hannes and Jing, Bowen and Wang, Chenyu and Corso, Gabriele and Berger, Bonnie and Barzilay, Regina and Jaakkola, Tommi},
  journal={arXiv preprint arXiv:2402.05841},
  year={2024}
}

@article{austin2021structured,
  title={Structured denoising diffusion models in discrete state-spaces},
  author={Austin, Jacob and Johnson, Daniel D and Ho, Jonathan and Tarlow, Daniel and Van Den Berg, Rianne},
  journal={Advances in Neural Information Processing Systems},
  volume={34},
  pages={17981--17993},
  year={2021}
}

@article{chen2022analog,
  title={Analog bits: Generating discrete data using diffusion models with self-conditioning},
  author={Chen, Ting and Zhang, Ruixiang and Hinton, Geoffrey},
  journal={arXiv preprint arXiv:2208.04202},
  year={2022}
}

@article{graves2023bayesian,
  title={Bayesian flow networks},
  author={Graves, Alex and Srivastava, Rupesh Kumar and Atkinson, Timothy and Gomez, Faustino},
  journal={arXiv preprint arXiv:2308.07037},
  year={2023}
}

@article{lou2023discrete,
  title={Discrete diffusion language modeling by estimating the ratios of the data distribution},
  author={Lou, Aaron and Meng, Chenlin and Ermon, Stefano},
  journal={arXiv preprint arXiv:2310.16834},
  year={2023}
}

@article{tran2019discrete,
  title={Discrete flows: Invertible generative models of discrete data},
  author={Tran, Dustin and Vafa, Keyon and Agrawal, Kumar and Dinh, Laurent and Poole, Ben},
  journal={Advances in Neural Information Processing Systems},
  volume={32},
  year={2019}
}

@article{hoogeboom2021argmax,
  title={Argmax flows and multinomial diffusion: Learning categorical distributions},
  author={Hoogeboom, Emiel and Nielsen, Didrik and Jaini, Priyank and Forr{\'e}, Patrick and Welling, Max},
  journal={Advances in Neural Information Processing Systems},
  volume={34},
  pages={12454--12465},
  year={2021}
}

@article{campbell2024generative,
  title={Generative Flows on Discrete State-Spaces: Enabling Multimodal Flows with Applications to Protein Co-Design},
  author={Campbell, Andrew and Yim, Jason and Barzilay, Regina and Rainforth, Tom and Jaakkola, Tommi},
  journal={arXiv preprint arXiv:2402.04997},
  year={2024}
}

@article{alamdari2023protein,
  title={Protein generation with evolutionary diffusion: sequence is all you need},
  author={Alamdari, Sarah and Thakkar, Nitya and van den Berg, Rianne and Lu, Alex Xijie and Fusi, Nicolo and Amini, Ava Pardis and Yang, Kevin K},
  journal={bioRxiv},
  pages={2023--09},
  year={2023},
  publisher={Cold Spring Harbor Laboratory}
}

@article{campbell2022continuous,
  title={A continuous time framework for discrete denoising models},
  author={Campbell, Andrew and Benton, Joe and De Bortoli, Valentin and Rainforth, Thomas and Deligiannidis, George and Doucet, Arnaud},
  journal={Advances in Neural Information Processing Systems},
  volume={35},
  pages={28266--28279},
  year={2022}
}

@inproceedings{santos2023blackout,
  title={Blackout diffusion: generative diffusion models in discrete-state spaces},
  author={Santos, Javier E and Fox, Zachary R and Lubbers, Nicholas and Lin, Yen Ting},
  booktitle={International Conference on Machine Learning},
  pages={9034--9059},
  year={2023},
  organization={PMLR}
}

@article{mahabadi2023tess,
  title={Tess: Text-to-text self-conditioned simplex diffusion},
  author={Mahabadi, Rabeeh Karimi and Tae, Jaesung and Ivison, Hamish and Henderson, James and Beltagy, Iz and Peters, Matthew E and Cohan, Arman},
  journal={arXiv preprint arXiv:2305.08379},
  year={2023}
}

@article{sahoo2024simple,
  title={Simple and Effective Masked Diffusion Language Models},
  author={Sahoo, Subham Sekhar and Arriola, Marianne and Schiff, Yair and Gokaslan, Aaron and Marroquin, Edgar and Chiu, Justin T and Rush, Alexander and Kuleshov, Volodymyr},
  journal={arXiv preprint arXiv:2406.07524},
  year={2024}
}

@article{gat2024discrete,
  title={Discrete flow matching},
  author={Gat, Itai and Remez, Tal and Shaul, Neta and Kreuk, Felix and Chen, Ricky TQ and Synnaeve, Gabriel and Adi, Yossi and Lipman, Yaron},
  journal={arXiv preprint arXiv:2407.15595},
  year={2024}
}

@article{vaswani2017attention,
  title={Attention is all you need},
  author={Vaswani, Ashish and Shazeer, Noam and Parmar, Niki and Uszkoreit, Jakob and Jones, Llion and Gomez, Aidan N and Kaiser, {\L}ukasz and Polosukhin, Illia},
  journal={Advances in neural information processing systems},
  volume={30},
  year={2017}
}

@inproceedings{he2016deep,
  title={Deep residual learning for image recognition},
  author={He, Kaiming and Zhang, Xiangyu and Ren, Shaoqing and Sun, Jian},
  booktitle={Proceedings of the IEEE conference on computer vision and pattern recognition},
  pages={770--778},
  year={2016}
}

@inproceedings{lin2017refinenet,
  title={Refinenet: Multi-path refinement networks for high-resolution semantic segmentation},
  author={Lin, Guosheng and Milan, Anton and Shen, Chunhua and Reid, Ian},
  booktitle={Proceedings of the IEEE conference on computer vision and pattern recognition},
  pages={1925--1934},
  year={2017}
}

@inproceedings{szegedy2016rethinking,
  title={Rethinking the inception architecture for computer vision},
  author={Szegedy, Christian and Vanhoucke, Vincent and Ioffe, Sergey and Shlens, Jon and Wojna, Zbigniew},
  booktitle={Proceedings of the IEEE conference on computer vision and pattern recognition},
  pages={2818--2826},
  year={2016}
}

@article{dai2019transformer,
  title={Transformer-xl: Attentive language models beyond a fixed-length context},
  author={Dai, Zihang and Yang, Zhilin and Yang, Yiming and Carbonell, Jaime and Le, Quoc V and Salakhutdinov, Ruslan},
  journal={arXiv preprint arXiv:1901.02860},
  year={2019}
}

@article{radford2019language,
  title={Language models are unsupervised multitask learners},
  author={Radford, Alec and Wu, Jeffrey and Child, Rewon and Luan, David and Amodei, Dario and Sutskever, Ilya and others},
  journal={OpenAI blog},
  volume={1},
  number={8},
  pages={9},
  year={2019}
}

@article{chen2022sequence,
  title={A sequence-based global map of regulatory activity for deciphering human genetics},
  author={Chen, Kathleen M and Wong, Aaron K and Troyanskaya, Olga G and Zhou, Jian},
  journal={Nature genetics},
  volume={54},
  number={7},
  pages={940--949},
  year={2022},
  publisher={Nature Publishing Group US New York}
}

@article{devlin2018bert,
  title={Bert: Pre-training of deep bidirectional transformers for language understanding},
  author={Devlin, Jacob and Chang, Ming-Wei and Lee, Kenton and Toutanova, Kristina},
  journal={arXiv preprint arXiv:1810.04805},
  year={2018}
}

@inproceedings{peebles2023scalable,
  title={Scalable diffusion models with transformers},
  author={Peebles, William and Xie, Saining},
  booktitle={Proceedings of the IEEE/CVF International Conference on Computer Vision},
  pages={4195--4205},
  year={2023}
}

@article{hutchinson1989stochastic,
  title={A stochastic estimator of the trace of the influence matrix for Laplacian smoothing splines},
  author={Hutchinson, Michael F},
  journal={Communications in Statistics-Simulation and Computation},
  volume={18},
  number={3},
  pages={1059--1076},
  year={1989},
  publisher={Taylor \& Francis}
}

@article{theis2015note,
  title={A note on the evaluation of generative models},
  author={Theis, Lucas and Oord, A{\"a}ron van den and Bethge, Matthias},
  journal={arXiv preprint arXiv:1511.01844},
  year={2015}
}

@inproceedings{salakhutdinov2008quantitative,
title={On the quantitative analysis of deep belief networks},
author={Salakhutdinov, Ruslan and Murray, Iain},
booktitle={Proceedings of the 25th international conference on Machine learning},
pages={872--879},
year={2008},
organization={ACM}
}

@article{lecun2010mnist,
  title={MNIST handwritten digit database},
  author={LeCun, Yann and Cortes, Corinna and Burges, CJ},
  journal={ATT Labs [Online]. Available: http://yann.lecun.com/exdb/mnist},
  volume={2},
  year={2010}
}

@article{DORMAND198019,
title = {A family of embedded Runge-Kutta formulae},
journal = {Journal of Computational and Applied Mathematics},
volume = {6},
number = {1},
pages = {19-26},
year = {1980},
issn = {0377-0427},
author = {J.R. Dormand and P.J. Prince},
abstract = {A family of embedded Runge-Kutta formulae RK5 (4) are derived. From these are presented formulae which have (a) ‘small’ principal truncation terms in the fifth order and (b) extended regions of absolute stability.}
}

@article{graves2013generating,
  title={Generating sequences with recurrent neural networks},
  author={Graves, Alex},
  journal={arXiv preprint arXiv:1308.0850},
  year={2013}
}

@article{salazar2019masked,
  title={Masked language model scoring},
  author={Salazar, Julian and Liang, Davis and Nguyen, Toan Q and Kirchhoff, Katrin},
  journal={arXiv preprint arXiv:1910.14659},
  year={2019}
}

@misc{mahoney2011large,
  title={Large text compression benchmark},
  author={Mahoney, Matt},
  year={2011}
}

@article{hon2017atlas,
  title={An atlas of human long non-coding RNAs with accurate 5' ends},
  author={Hon, Chung-Chau and Ramilowski, Jordan A and Harshbarger, Jayson and Bertin, Nicolas and Rackham, Owen JL and Gough, Julian and Denisenko, Elena and Schmeier, Sebastian and Poulsen, Thomas M and Severin, Jessica and others},
  journal={Nature},
  volume={543},
  number={7644},
  pages={199--204},
  year={2017},
  publisher={Nature Publishing Group UK London}
}

@article{krizhevsky2009learning,
  title={Learning multiple layers of features from tiny images},
  author={Krizhevsky, Alex and Hinton, Geoffrey and others},
  year={2009},
  publisher={Toronto, ON, Canada}
}

@article{le2021fisher,
  title={Fisher-Rao geometry of Dirichlet distributions},
  author={Le Brigant, Alice and Preston, Stephen C and Puechmorel, St{\'e}phane},
  journal={Differential Geometry and its Applications},
  volume={74},
  pages={101702},
  year={2021},
  publisher={Elsevier}
}

@article{li2024full,
  title={Full-Atom Peptide Design based on Multi-modal Flow Matching},
  author={Li, Jiahan and Cheng, Chaoran and Wu, Zuofan and Guo, Ruihan and Luo, Shitong and Ren, Zhizhou and Peng, Jian and Ma, Jianzhu},
  journal={arXiv preprint arXiv:2406.00735},
  year={2024}
}

@misc{gpt-j,
  author = {Wang, Ben and Komatsuzaki, Aran},
  title = {{GPT-J-6B: A 6 Billion Parameter Autoregressive Language Model}},
  howpublished = {\url{https://github.com/kingoflolz/mesh-transformer-jax}},
  year = 2021,
  month = May
}

@article{tong2023improving,
  title={Improving and generalizing flow-based generative models with minibatch optimal transport},
  author={Tong, Alexander and Malkin, Nikolay and Huguet, Guillaume and Zhang, Yanlei and Rector-Brooks, Jarrid and Fatras, Kilian and Wolf, Guy and Bengio, Yoshua},
  journal={arXiv preprint arXiv:2302.00482},
  year={2023}
}

\clearpage  
\newpage
\appendix
\centerline{\Large\bf Supplementary Material}

\section{Riemannian Manifold and Information Geometry} \label{sec:infogeo}
In this section, we give a more detailed mathematical background on Riemannian manifold and information geometry related to this work. A more comprehensive background on the Riemannian manifold can be found in \cite{gallot2004riemannian}. For information geometry, \cite{ay2017information,amari2000methods} provide more comprehensive analyses and rigorous formulations.

\subsection{Riemannian Manifold}\label{sec:riemann}
A Riemannian manifold $\mathcal{M}$ is a real, smooth manifold equipped with a positive definite inner product $g$ on the tangent space $T_x(\mathcal{M})$ at each point $x\in\mathcal{M}$. Let $T\mathcal{M}=\bigcup_{x\in\mathcal{M}}T_x(\mathcal{M})$ be the \emph{tangent bundle} of the manifold $\mathcal{M}$, a time-dependent \emph{vector field} on $\mathcal{M}$ is a mapping $u_t:[0,1]\times\mathcal{M}\to T\mathcal{M}$ where $u_t(x)\in T_x(\mathcal{M})$. A \emph{geodesic} is a locally distance-minimizing curve on the manifold. The existence and the uniqueness of the geodesic state that for any point $x\in\mathcal{M}$ and for any tangent vector $u\in T_x(\mathcal{M})$, there exists a unique geodesic $\gamma:[0,1]\to\mathcal{M}$ such that $\gamma(0)=x$ and $\gamma'(0)=u$. The \emph{exponential map} $\exp:\mathcal{M}\times T\mathcal{M}\to\mathcal{M}$ is uniquely defined to be $\exp_x(u):=\gamma(1)$. The \emph{logarithm map} $\log:\mathcal{M}\times \mathcal{M}\to T\mathcal{M}$ is defined as the inverse mapping of the exponential map such that $\exp_x(\log_x(y))\equiv y,\forall x,y\in \mathcal{M}$. Note that the formulation here is slightly different from Sec.\ref{sec:prelim_infogeo} where we fix a point $\mu$ on $\mathcal{P}$. With the exponential map and logarithm map, the time-dependent flow can be compactly written as time interpolation along the geodesic:
\begin{equation}
    x_t:=\psi_t(x_t|x_0,x_1)=\exp_{x_0}(t\log_{x_0}x_1),\quad t\in[0,1] .
\end{equation}
It can be demonstrated that the above flow indeed traces the geodesic between $x_0,x_1$ with linearly decreasing geodesic distance $d_g(x_t,x_1)=(1-t)d_g(x_0,x_1)$ \cite{chen2023riemannian}. For an $n$-dimensional Riemannian manifold, the geodesic can be obtained by solving the geodesic equation (using the Einstein summation convention):
\begin{equation}
    \frac{\diff^2 x^k}{\diff t^2}+\Gamma_{ij}^k\frac{\diff x^i}{\diff t}\frac{\diff x^j}{\diff t}=0,\quad k=1,2,\dots,n ,
\end{equation}
where $x^i$ are local coordinates of the geodesic and $\Gamma_{ij}^k$ are the Christoffel symbols defined by
\begin{equation}
    \Gamma_{ij}^k=\frac{1}{2}g^{km}\left(\frac{\partial g_{mi}}{\partial x^j}+\frac{\partial g_{mj}}{\partial x^i}-\frac{\partial g_{ij}}{\partial x^m}\right),\quad i,j,k=1,2,\dots,n ,
\end{equation}
where $g^{ij}$ is the inverse metric such that $g^{ij}g_{jk}=\delta_k^i$.
The closed-form expressions for the geodesic are generally not available. \cite{miyamoto2023closed} curated a wide range of common statistical manifolds for parameterized families of both the discrete and the continuous distributions. We will focus on the statistical manifold of categorical distributions and also the spherical manifold, as the latter is closely related to the former via the diffeomorphism $\pi$ in Eq.\eqref{eqn:mapping}.

\subsection{Spherical Manifold}\label{sec:sphere}
The positive orthant of the unit $(n-1)$-sphere $S^{n-1}_+$ is a $(n-1)$-dimensional manifold. The sphere can be embedded into the ambient Euclidean manifold $\mathbb{R}^n$ such that it inherits the canonical inner product as
\begin{equation}
    \langle u,v\rangle_S=\langle u,v\rangle=\sum_{i=1}^n u_i v_i,\quad u,v\in T_x(S^{n-1}_+) .
\end{equation}
The tangent space $T_x(S^{n-1}_+)=\{u|\langle u,x\rangle=0\}$ is a $(n-1)$-dimensional hyperplane perpendicular to the vector $x$. The geodesic on the sphere follows the great circle between two points, and the geodesic distance can be calculated in Eq.\eqref{eqn:d_sphere}. The corresponding exponential map can be calculated as:
\begin{equation}
    \exp_x(u)=x \cos \|u\|_2+u\sinc \|u\|_2 , \label{eqn:exp_sphere}
\end{equation}
where $\sinc(\theta)=\sin\theta/\theta$ is the unnormalized sinc function. The logarithm map can be calculated as:
\begin{equation}
    \log_x(y)=\arccos(\langle x,y\rangle)\frac{P_x(y-x)}{\|P_x(y-x)\|_2} , \label{eqn:log_sphere}
\end{equation}
where $P_x(w)=w-\langle x,w\rangle x$ is the projection of vector $w$ onto the tangent space $T_x(S^{n-1}_+)$.

\subsection{Statistical Manifold of Categorical Distributions} \label{sec:stat_manifold}
By calculating the Fisher information matrix for a categorical distribution, we can obtain the explicit form of the metric tensor as \cite{miyamoto2023closed}
\begin{equation}
    g_{jk}(\mu)=\frac{\delta_{jk}}{\mu_j}+\frac{1}{\mu_n},\quad 1\le i,j,\le n-1 ,
\end{equation}
where $\delta_{jk}$ is the Kronecker delta. Substituting this into $\langle u,v\rangle_\mu$ leads to the Riemannian inner product in Eq.\eqref{eqn:inner}.
The statistical manifold of categorical distributions $\mathcal{P}(\mathcal{X})$ is closely related to $S^{n-1}_+$ by the diffeomorphism $\pi$ in Eq.\eqref{eqn:mapping}. In fact, $\pi$ is an isometry between the Fisher information metric on $\mathcal{P}$ and the standard induced metric on $S^{n-1}_+$ up to a constant scaling factor of $1/4$. To see this, we have the following proposition:
\begin{proposition}
\begin{equation}
    \left\langle\frac{\partial \pi(\mu)}{\partial u},\frac{\partial \pi(\mu)}{\partial v}\right\rangle=\frac{1}{4}\langle u,v\rangle_\mu,\quad u,v\in T_\mu(\mathcal{P}) .
\end{equation}    
\end{proposition}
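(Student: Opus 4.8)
The plan is to compute the differential $d\pi_\mu$ coordinatewise and substitute it into the ambient Euclidean inner product that $S^{n-1}_+$ inherits from $\mathbb{R}^n$. Since $\pi$ acts coordinatewise by $x_i=\sqrt{\mu_i}$, the chain rule $\partial_{\mu_i}\sqrt{\mu_i}=\tfrac{1}{2\sqrt{\mu_i}}$ gives, for $u\in T_\mu(\mathcal{P})$,
\[
\left(\frac{\partial\pi(\mu)}{\partial u}\right)_i=\frac{u_i}{2\sqrt{\mu_i}}.
\]
This is well-defined precisely because we are on $\mathcal{P}^\circ$, where every $\mu_i>0$ — the same domain on which $\langle\cdot,\cdot\rangle_\mu$ in Eq.~\eqref{eqn:inner} makes sense.

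First I would check that this image is genuinely tangent to the sphere at $\pi(\mu)$, so that restricting the Euclidean inner product is the right thing to do. Using $T_\mu(\mathcal{P})=\{u:\sum_i u_i=0\}$ and the expression for $T_x(S^{n-1}_+)$ from the Spherical Manifold section,
\[
\left\langle\frac{\partial\pi(\mu)}{\partial u},\,\pi(\mu)\right\rangle=\sum_{i=1}^n\frac{u_i}{2\sqrt{\mu_i}}\sqrt{\mu_i}=\frac12\sum_{i=1}^n u_i=0,
\]
so $\frac{\partial\pi(\mu)}{\partial u}\in T_{\pi(\mu)}(S^{n-1}_+)$ as required.

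Then the identity is immediate:
\[
\left\langle\frac{\partial \pi(\mu)}{\partial u},\frac{\partial \pi(\mu)}{\partial v}\right\rangle=\sum_{i=1}^n\frac{u_i}{2\sqrt{\mu_i}}\cdot\frac{v_i}{2\sqrt{\mu_i}}=\frac14\sum_{i=1}^n\frac{u_i v_i}{\mu_i}=\frac14\langle u,v\rangle_\mu.
\]
Optionally, to make the statement fully self-contained, one could first verify that $\langle u,v\rangle_\mu=\sum_i u_iv_i/\mu_i$ is indeed the Fisher information metric \eqref{eqn:fisher} of the categorical family, by writing $\log p(x;\mu)=\sum_i \mathbf{1}[x=i]\log\mu_i$ and differentiating along curves through $\mu$ with velocities in $\{\sum_i u_i=0\}$; but since the paper already takes Eq.~\eqref{eqn:inner} as the Fisher metric, I would treat this as a remark rather than part of the proof.

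There is no serious obstacle here: the content is a one-line chain-rule computation, and the only points that need care are (i) the tangency check that licenses using the ambient inner product and (ii) keeping to the interior $\mathcal{P}^\circ$ where all coordinates are positive. It is worth noting that this proposition is the infinitesimal form of Proposition~\ref{thm:diffeo}: rescaling the metric by the constant $1/4$ rescales lengths, and hence geodesic distances, by $1/2$, which is exactly the relation $d_S(\pi(\mu),\pi(\nu))=\tfrac12 d_{\mathrm{cat}}(\mu,\nu)$.
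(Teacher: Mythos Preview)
Your proof is correct and is essentially the same as the paper's: both compute the directional derivative $\left(\frac{\partial\pi(\mu)}{\partial u}\right)_i=\frac{u_i}{2\sqrt{\mu_i}}$ via the chain rule and then take the Euclidean inner product to recover $\frac14\sum_i u_iv_i/\mu_i$. Your additional tangency check and the remark linking the $1/4$ factor to Proposition~\ref{thm:diffeo} are not in the paper's one-line proof, but they are welcome clarifications rather than a different approach.
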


\begin{proof}
\begin{equation*}
    \left\langle\frac{\partial \pi(\mu)}{\partial u},\frac{\partial \pi(\mu)}{\partial v}\right\rangle
    =\left\langle\left.\frac{\diff}{\diff t}\pi(\mu+tu)\right|_{t=0},\left.\frac{\diff}{\diff t}\pi(\mu+tv)\right|_{t=0}\right\rangle
    =\sum_{i=1}^n\frac{u_i}{2\sqrt{\mu_i}}\frac{v_i}{2\sqrt{\mu_i}}=\frac{1}{4}\langle u,v\rangle_\mu .
\end{equation*}
\end{proof}

Note that the infinitesimal arc length on the geodesic can be expressed as $\diff s^2=\|\diff x\|_g^2=\sum_{jk}\diff x_j g_{jk}(x) \diff x_k$ where $\|\cdot\|_g$ is the canonical Riemannian norm induced by the inner product. Integrating over the geodesic, we can easily arrive at the result in Proposition \ref{thm:diffeo}.
% This identity can also be directly verified by comparing Eq.\eqref{eqn:d_cat} with Eq.\eqref{eqn:d_sphere}.
Based on this, the exponential map and logarithm map on this statistical manifold can be written as
\begin{align}
    \exp_\mu(u)&=\left(\sqrt{\mu} \cos \left\|\frac{u}{2\sqrt{\mu}}\right\|_2+\frac{u}{2\sqrt{\mu}} \sinc \left\|\frac{u}{2\sqrt{\mu}}\right\|_2\right)^2 , \label{eqn:exp_stat}\\
    \log_\mu(\nu)&=\frac{2\arccos(\langle\sqrt{\mu},\sqrt{\nu}\rangle)}{\sqrt{1-\langle\sqrt{\mu},\sqrt{\nu}\rangle^2}}(\sqrt{\mu\odot\nu}-\langle\sqrt{\mu},\sqrt{\nu}\rangle \mu) ,\label{eqn:log_stat}
\end{align}
where $\odot$ denotes the pairwise multiplication. We mentioned in the main text that directly using the exponential and logarithm maps on the statistical manifold has numerical issues near the boundary. Instead, we used the mapping $\pi$ to work with the spherical manifold. Nonetheless, Eq.\eqref{eqn:exp_stat} and \eqref{eqn:log_stat} provide useful tools for visualization of the statistical manifold, as demonstrated in Fig.\ref{fig:geo}.

We specifically note the property that the Riemannian structure induced by the Fisher information metric leads to vector fields more parallel to the boundaries. This can also be derived mathematically from the logarithm map in Eq.\eqref{eqn:log_stat}. Consider the direction term $\sqrt{\mu\odot\nu}-\langle\sqrt{\mu},\sqrt{\nu}\rangle \mu$. For $\mu$ close to the boundary with $\mu_k\approx 0$, its corresponding vector field will also have a close to $u_k\approx 0$ component, which is different from linear flow matching's fixed $\nu-\mu$.
We hypothesize that one potential benefit of such a curved geometry over the naive Euclidean geometry is that the former helps prevent overshooting across the boundaries. Specifically, consider a target point on the boundary. The Euclidean vector field will continue to push the points outside the manifold, whereas the Riemannian vector field tends to travel parallel to that boundary to prevent going across the boundary. Once overshooting happens during sampling, the model may exhibit undefined behavior as it was never trained on the points outside the manifold.

We also noted the relation between the Fisher information metric defined in Eq.\eqref{eqn:fisher} and the canonical inner product defined in Eq.\eqref{eqn:inner} for general statistical manifolds. Let $\mathcal{M}$ be an $n$-dimensional differentiable manifold and consider an embedding defined by $\mathcal{M}\hookrightarrow \mathcal{P(M)},\theta\mapsto p(\theta)=\sum_{i\in\mathcal{X}}p_i(\theta)\delta^i$. The pullback of the Fisher metric $g$ defines a metric on $\mathcal{M}$. For $u,v\in T_\theta(\mathcal{M})$, we have
\begin{equation}
\begin{aligned}
g_{\theta}(u, v) & :=p^{*}(g)_{\theta}(u, v) \\
& = g_{p(\theta)}\left(\frac{\partial p}{\partial u}, \frac{\partial p}{\partial v}\right) \\
& =\sum_{i} \frac{1}{p_{i}(\theta)} \frac{\partial p_{i}}{\partial u}(\theta) \frac{\partial p_{i}}{\partial v}(\theta) \\
& =\sum_{i} p_{i}(\theta) \frac{\partial \log p_{i}}{\partial u}(\theta) \frac{\partial \log p_{i}}{\partial v}(\theta) 
\end{aligned}
\end{equation}
which gives the more common definition of the Fisher information matrix in Eq.\eqref{eqn:fisher}. This relation also generally holds for a continuous sample space $\mathcal{X}$ but requires more careful handling with measure theory tools. We refer interested readers to mathematical references, e.g., Chapter 3.1 in \cite{ay2017information}.

\subsection{Other Statistical Manifolds}\label{sec:other_stat}
The statistical manifold of categorical distributions provides an intuitive way of modeling discrete targets. However, we noted that other parameterized distributions can also be used to derive alternative statistical manifolds with different geometries. We will briefly discuss some manifolds that have been used in previous generative models (though not from the information geometry perspective as we do).

\paragraph{Dirichlet Distribution}
The Dirichlet distribution $\mathrm{Dir}(\alpha)$ is a multivariate generalization of the beta distribution. It is defined on the $(n-1)$-dimensional simplex $\Delta^{n-1}$ with $n$ concentration parameters $\alpha=\{\alpha_i\}_{i=1}^n$. The probability density function is defined as 
\begin{equation}
    p(x;\alpha)=\frac{1}{\mathrm{B}(\alpha)}\prod_{i=1}^n x_i^{\alpha_i-1},\quad x\in\Delta^{n-1}
\end{equation}
where $\mathrm{B}(\alpha)=\prod_{i=1}^n \Gamma(\alpha_i)/\Gamma(\sum_{i=1}^n\alpha_i)$ is the multivariate beta function. The closed-form expressions for the geodesics between Dirichlet distributions (and for the marginal beta distributions, as were used in \cite{avdeyev2023dirichlet}) are unknown. It is known that, however, this statistical manifold has non-constant negative sectional curvatures everywhere \cite{le2021fisher}. \cite{avdeyev2023dirichlet} proposed to follow the specific path of Jacobi diffusion processes on the marginal beta distributions, for which expensive modified Jacobi polynomials needed to be precomputed. \cite{stark2024dirichlet} further applied the flow matching framework by considering the probability path of $\mathrm{Dir}(1+t e_k)$ for some fixed maximum timestep. Still, expensive calculations of the regularized incomplete beta functions were required. Compared with these two generative models, our proposed SFM on categorical distributions has the following advantages:
\begin{itemize}
    \item DDSM and DirichletFM have strong prior assumptions that the categorical distributions follow some specific forward noising process which always terminates at (or near) one-hot distributions. This assumption generally holds for discrete generation, but cannot be generalized to more complex geometry on the probability simplex, as we have demonstrated in our toy example of the Swiss roll on simplex dataset in Fig.\ref{fig:swissroll}.
    \item The forward noising paths are not the shortest geodesics in the sense of the Riemannian metric. In our SFM, the vector fields always follow the geodesic, thus following the sharpest direction of decreasing KL divergence (see Sec.\ref{sec:optim}).
    \item Our SFM framework does not require the computation of expensive polynomials and is more mathematically concise.
    \item The final one-hot distributions are unreachable in \cite{stark2024dirichlet}, as it would require an infinite timestep. Therefore, a variational bound needs to be derived for likelihood estimation, which was not done in the original paper.
\end{itemize}

\paragraph{Multinomial Distribution}
Consider $m$ i.i.d. experiments that follow a categorical distribution with $n$ possible outcomes and probability $\mu=\{\mu_i\}_{i=1}^n$, a multinomial distribution gives the probability of getting $x_i$ times the $i$-th outcome with $\sum_{i=1}^n x_i=m$. The geodesic distance for multinomial distributions is identical to that of the categorical distributions up to a scaling constant \cite{miyamoto2023closed}:
\begin{equation}
    d_\text{mul}(\mu,\nu)=\sqrt{n}d_\text{cat}(\mu,\nu)=2\sqrt{n}\arccos\left(\sum_{i=1}^n\sqrt{\mu_i\nu_i}\right) .
\end{equation}
Therefore, our model can also be interpreted as multinomial flow matching on the corresponding statistical manifold. The previous work of multinomial diffusion (argmax flow) \cite{hoogeboom2021argmax} transformed the one-hot distribution to $n e_k-1$ in the logit space (soft one-hot logits) and assumed the common Euclidean structure of the logit space for constructing the diffusion process. This assumption did not consider the intrinsic geometry of the underlying statistical manifold and also had the inaccessibility issue of the Dirac measure as DirichletFM. We have demonstrated that SFM consistently outperformed the naive multinomial diffusion on the Text8 datasets, indicating the effectiveness of tracing the true geodesics.

\section{Likelihood Calculation} 
In this section, we provide more concrete mathematical details regarding the exact likelihood calculation in the main text. We first make a distinction between negative log-likelihood (NLL) and bits-per-dimension (BPD, including bits-per-character, BPC) in this work. As we are dealing with probability distributions over the statistical manifold of probability measures, it is important for us to keep the difference between the probability spaces $\mathcal{P}(\mathcal{P}(\mathcal{X}))$ and $\mathcal{P}(\mathcal{X})$. In our notation, we use $\mu,\nu$ to denote elements in $\mathcal{P}(\mathcal{X})$, or categorical distributions, and use $p,q$ to denote elements in $\mathcal{P}(\mathcal{P}(\mathcal{X}))$, or distributions over categorical distributions. We further use NLL to refer to $-\log p(\mu)$ for a specific $\mu$ and BPD for categorical likelihood $-\log_2 p(\delta|\mu)$ where $\delta$ is the ground truth Dirac measure (one-hot distribution). Note this definition of BPD (BPC) follows the convention in autoregressive language models where the conditional probabilities for the next token are calculated and averaged \cite{dai2019transformer,graves2013generating}. In contrast, most previous work drew equivalence between these two concepts.

\subsection{Negative Log-Likelihood (NLL) Calculation} \label{sec:nll_cal}
Our SFM framework enjoys the exact negative log-likelihood calculation as a continuous normalizing flow model \cite{lipman2022flow,chen2023riemannian}. It is worth noting that negative NLLs are not uncommon \cite{chen2023riemannian}. As a probability density, the NLL can indeed go to negative infinity at the boundary. Consider the arcsine distribution defined on the finite support $[0,1]$ with the probability density function
\begin{equation}
    p(\theta)=\frac{1}{\pi\sqrt{\theta(1-\theta)}},\quad \theta\in (0,1) .
\end{equation}
It is easy to see the density approaches infinity as $\theta$ approaches 0 or 1. Note that the arcsine distribution is a special case of beta distribution $\mathrm{Beta}(\frac{1}{2},\frac{1}{2})$, which is in turn a special case of Dirichlet distribution $\mathrm{Dir}(\frac{1}{2},\frac{1}{2})$. Each $\theta$ can be naturally viewed as the parameter for the Bernoulli distribution (2-class categorical distribution), so the arcsine distribution indeed defines a distribution over Bernoulli distributions with negative NLLs near the boundary. In fact, for discrete data, the target data always come in as Dirac measures, which indicates that the target distribution must be a convex combination of Dirac measures with infinite likelihood: $q=\sum_{i=1}^n\theta_i\delta_i$ where $\delta_i:=\delta_{\delta^i}$ is the Dirac measure over the underlying Dirac measure $\delta^i$ on the discrete class $i$.

\paragraph{Variational Bound}
We followed \cite{avdeyev2023dirichlet} to derive a variational bound as marginalized over a small neighborhood of the Dirac measure in Eq.\eqref{eqn:elbo} to obtain finite NLLs. The variational bound can be derived via the standard variational approach as
\begin{equation}
\begin{aligned}
\log p(\delta) & =\mathbb{E}_{q(\mu|\delta)}\left[\log p(\delta)\right] \\
& =\mathbb{E}_{q(\mu|\delta)}\left[\log \left(\frac{p(\delta,\mu)}{p(\mu|\delta)}\right)\right] \\
& =\mathbb{E}_{q(\mu|\delta)}\left[\log \left(\frac{p(\delta,\mu)}{q(\mu|\delta)}\frac{q(\delta,\mu)}{p(\mu|\delta)}\right)\right] \\
& =\mathbb{E}_{q(\mu|\delta)}\left[\log \left(\frac{p(\delta,\mu)}{q(\mu|\delta)}\right)\right] + D_\text{KL}(q(\mu|\delta)\|p(\mu|\delta)) \\
&\ge \mathbb{E}_{q(\mu|\delta)}\left[\log \left(\frac{p(\delta,\mu)}{q(\mu|\delta)}\right)\right] \\
& = \mathbb{E}_{q(\mu|\delta)}\left[\log p(\delta|\mu) + \log p(\mu) - \log q(\mu|\delta)\right] .
\end{aligned}
\end{equation}

The posterior probability $q(\mu|\delta)$ serves as the forward diffusion process in \cite{austin2021structured,stark2024dirichlet} with a closed-form expression that depends on the timestep. In our Riemannian flow matching formulation, however, such a forward process is implicitly defined via the time-interpolation along the geodesics. Nonetheless, we do know that the conditional distribution should approach the Dirac measure when $t\to 1$ as the geodesic distance approaches 0: $p_1(\mu|\delta)\approx \delta$. Therefore, as we are marginalized over a small neighborhood, we can define a time-dependent posterior distribution $q_t(\mu|\delta)$ that approaches $\delta$ as $t\to 1$. In principle, any forward probability that approaches the Dirac measure at $t=1$ with an analytical likelihood can be used to derive the lower bound. For example, we can follow \cite{avdeyev2023dirichlet} to use the Jacobi diffusion process. To avoid the need for expensive Jacobi polynomial calculation, we instead use simple indicator measures over the simplex as
\begin{equation}
q_t(\mu|\delta^{k})=
\begin{cases}
    \frac{p_0}{(1-t)^{n-1}},&\mu_{i}\le t,1\le i\le n, i\ne k\\
    0,&\text{otherwise}
\end{cases}  \label{eqn:forward}
\end{equation}
where $p_0$ is the base probability of the uniform distribution over the simplex, $\mu_i=\theta_i$ is the $i$-th component of the categorical distribution of $\mu$. In other words, $q_t(\mu|\delta^{k})$ is the time-interpolated probability between the Dirac measure and the uniform distribution: $q_t(\mu|\delta^{k})=t\delta_k+(1-t)U\Delta^{n-1}$. We have
\begin{equation}
    \log q_t(\mu|\delta^{k})=\log p_0 -(n-1)\log(1-t) . \label{eqn:posterior}
\end{equation}
In practice, we use $t=0.995$ and provide more results in the ablation studies in Appendix \ref{sec:ablation}. The categorical likelihood $p(\delta^k|\mu)$ is the standard cross-entropy loss, also referred to as the \emph{reconstruction loss} in some previous work \cite{graves2023bayesian}. It can be calculated as 
\begin{equation}
    \log p(\delta^k|\mu)=\log \mu_k . \label{eqn:ce}
\end{equation}

\paragraph{Prior Likelihood}
The prior likelihood accounts for the base probability when we sample from the prior noise distribution. During both the training and the inference stages, we uniformly sample from the simplex $\Delta^{n-1}$. This can be efficiently done by normalizing $n$ i.i.d random variables from the exponential distribution:
\begin{equation}
    \Tilde{\theta}_i\sim \mathrm{Exp}(1),\theta_i =  \Tilde{\theta}_i/\sum_{k=1}^n \Tilde{\theta}_k,\quad i=1,2,\dots,n .
\end{equation}

The uniform distribution over simplex is exactly the Dirichlet distribution $\mathrm{Dir}(1,\dots,1)$. Therefore, the log-likelihood can be calculated as
\begin{equation}
    \log p_0=\log \Gamma(n) \label{eqn:base_prob}
\end{equation}
where $\Gamma$ is the Gamma function. Note that the prior likelihood is independent of the samples.

\paragraph{Change of Measure}
The diffeomorphism $\pi$ in Eq.\eqref{eqn:mapping} induces a pushforward measure on $S^{n-1}_+$ which can be described by the change of measure identity
\begin{equation}
    \pi_*P(\pi(\mu))|\det \diff\pi(\mu)|=P(x),x=\pi(\mu) .
\end{equation}
The change of measure term can be calculated using the change of volume formula with the canonical coordinates $\{\Tilde{x}_i\}_{i=1}^{n-1}$ and $\{\varphi_i\}_{i=1}^{n-1}$ as
\begin{equation}
\begin{aligned}
    \mu_1&=\Tilde{x}_1,\quad  &x_1=&\cos\varphi_1\\
    \mu_2&=\Tilde{x}_2,\quad  &x_2=&\sin\varphi_1\cos\varphi_2\\
    &\vdots &\vdots& \\
    \mu_{n-1}&=\Tilde{x}_{n-1},\quad  &x_{n-1}=&\sin\varphi_1\dots\sin\varphi_{n-2}\cos\varphi_{n-1}
\end{aligned}
\end{equation}
where $0\le\varphi_i\le\pi/2$. The diffeomorphism $x=\pi(\mu)$ is given by
\begin{equation}
\begin{aligned}
   \Tilde{x}_1&= \cos^2\varphi_1\\
   \Tilde{x}_2&= \sin^2\varphi_1\cos^2\varphi_2\\
   &\vdots \\
   \Tilde{x}_{n-1}&= \sin^2\varphi_1\dots\sin^2\varphi_{n-2}\cos^2\varphi_{n-1} .
\end{aligned}
\end{equation}
The change of measure term is
\begin{equation}
    \log |\det \diff\pi^{-1}(x)| = -(n-1)\log 2-\sum_{i=1}^n\log x_i . \label{eqn:forward_trans}
\end{equation}
Similarly, for the inverse mapping, the term is 
\begin{equation}
    \log |\det \diff\pi(\mu)| = (n-1)\log 2+\frac{1}{2}\sum_{i=1}^n\log \mu_i . \label{eqn:backward_trans}
\end{equation}

\paragraph{Model Likelihood}
The model likelihood can be viewed as the pushforward measure of the learned flow:
\begin{equation}
    p_t=(\psi_t)_* p_0 .
\end{equation}
We demonstrated in Sec.\ref{sec:nll} that this term can be calculated as the integral of divergence in Eq.\eqref{eqn:ode} plus the base probability in Eq.\eqref{eqn:base_prob}. More specifically, the NLL can be obtained as the solution to the following ODE system back through time with the initial condition of $x_1$ and $\log p^\text{ODE}_1=0$:
\begin{equation}
\frac{\diff}{\diff t}\left[\begin{array}{c}
x_t \\
\log p^\text{ODE}_t
\end{array}\right]=\left[\begin{array}{c}
v_t(x_t) \\
-\operatorname{div}_g(v_t)(x_t)
\end{array}\right] .
\label{eqn:nll_ode}
\end{equation}

For manifolds that can be embedded in the ambient Euclidean space (e.g., simplex and sphere), the Riemannian divergence can be calculated as the normal Euclidean divergence $\operatorname{div}_g=\operatorname{div}$ \cite{chen2023riemannian}. We further use Hutchinson's trace estimator \cite{hutchinson1989stochastic} to efficiently estimate the divergence as
\begin{equation}
    \mathrm{div}\: v_t(x)=\mathbb{E}_{\varepsilon\sim\mathcal{N}(0,I)}[\varepsilon^\top \nabla v_t(x)\varepsilon] .
\end{equation}
The expectation can be efficiently computed using the vector-Jacobian product. For data with a unitary dimension $D=1$ (e.g., Swiss roll on simplex), we also calculate the exact divergence using the equality $\operatorname{div} v=\operatorname{Tr}(J(v))$, where $J(v)$ is the Jacobian of the vector field $v$. The computational cost for the exact divergence calculation scales quadratically with the data dimension $D$, as the interdependence between data dimensions makes it computationally expensive to calculate the Jacobian. Therefore, we only demonstrated the NLL with exact divergence calculation for the Swiss roll dataset in Fig.\ref{fig:swissroll}, in which Hutchinson's trace estimator could make a decent estimation of the exact divergence.

Combining Eq.\eqref{eqn:posterior}, \eqref{eqn:ce}, \eqref{eqn:base_prob}, \eqref{eqn:forward_trans}, and \eqref{eqn:backward_trans} with Eq.\eqref{eqn:elbo} and \eqref{eqn:nll}, we can efficiently calculate the NLL given arbitrary Dirac measures as input. The NLL calculation scheme is visualized in Fig.\ref{fig:model} and described in Alg.\ref{alg:nll}.

\begin{algorithm}[ht]
    \caption{NLL Calculation for Discrete Data}\label{alg:nll}
    \begin{algorithmic}[1]
        \State Sample $\Tilde{\mu}_1\sim q_t(\mu|\delta)$ in Eq.\eqref{eqn:forward} and calculate $-\log q_t(\Tilde{\mu}_1|\delta)$ and $\log p(\delta|\Tilde{\mu}_1)$.
        \State Apply the diffeomorphism in Eq.\eqref{eqn:mapping} to obtain $\Tilde{x}_1=\pi(\Tilde{\mu}_1)$ and calculate $\log |\det \diff\pi^{-1}(\Tilde{x}_1)|$.
        \State Solve the ODE system in Eq.\eqref{eqn:nll_ode} to obtain $x_0$ and $\log p^\text{ODE}$.
        \State Apply $\pi^{-1}$ to obtain $\mu_0=\pi^{-1}(x_0)$ and calculate $\log |\det \diff\pi(\mu_0)|$.
        \State Calculate the base log probability $\log p_0(\mu_0)$.
        \State \textbf{return} NLL as in Eq.\eqref{eqn:elbo}.
    \end{algorithmic}
\end{algorithm}

\subsection{Bits-Per-Character (BPC) Calculation} \label{sec:bpc}
We have demonstrated that the NLL can reach negative infinity. In contrast, the common definition of BPC with the cross-entropy loss $-\log_2 p(\delta|\mu)$ is always non-negative and consistent with previous autoregressive language models \cite{theis2015note}. However, as our flow-based model is not autoregressive, we instead follow the variational formulation in \cite{sahoo2024simple} to calculate the alternative NLL and BPC for comparison. The ELBO in \cite{sahoo2024simple} can be calculated as
\begin{equation}
    \mathcal{L}_\text{ELBO}=-\mathbb{E}_{\mu_1\sim q(\mu)}\left[\int_0^1\frac{1}{1-t}\log\langle \Hat{\mu}_1(\mu_t),\mu_1\rangle \diff t\right]\label{eqn:bpc}
\end{equation}
where $\mu_t=\exp_{\mu_0}(t\log_{\mu_0}\mu_1),\mu_0\sim p_0(\mu)$ denotes the interpolation along the geodesic and $\Hat{\mu}_1(\mu_t)$ denotes the learned model's prediction for $t=1$ given the current noise data $\mu_t$. This can be efficiently done by one-step prediction with the learned vector field $v$ as
\begin{equation}
   \Hat{\mu}_1(\mu_t)=\exp_{\mu_t}\left((1-t)v_t(\mu_t)\right).
\end{equation}
The standard Euclidean inner product is used in Eq.\eqref{eqn:bpc} so it can be understood as a weighted cross-entropy loss. Note that the integral in Eq.\eqref{eqn:bpc} has a singularity at $t=1$, making it numerically unstable to estimate using ODE solvers. Instead, we follow \cite{sahoo2024simple} to apply the change of variable $s=-\log (1-t)$ to reformulate the ELBO as
\begin{equation}
    \mathcal{L}_\text{ELBO}=-\mathbb{E}_{\mu_1\sim q(\mu)}\left[\int_0^\infty\log\langle \Hat{\mu}_1(\mu_{t}),\mu_1\rangle \diff s\right]
\end{equation}
where $t=1-e^{-s}$. Note that for large $s$, the corresponding $t$ is very close to 1, so the integrand is very close to zero. Indeed, $s=10$ corresponds to $1-t<5\times 10^{-5}$, so we simply set the upper limit to 10 and used the Dopri5 solver to numerically estimate the ELBO.
BPC can be then calculated as $\mathcal{L}_\text{ELBO}/\log 2$. This formulation also shares a similar form as the other ELBOs derived in \cite{austin2021structured,campbell2024generative}, and is thus comparable to most existing models.

Eq.\eqref{eqn:bpc} was directly optimized in Markov chain-based methods like D3PM \cite{austin2021structured}, MultiFlow \cite{campbell2024generative}, and SEDD \cite{lou2023discrete}, and also autoregressive language models. In contrast, our SFM (and LinearFM) follows the continuous normalizing flow setup in \cite{lipman2022flow}, which enables the exact likelihood calculation instead of ELBOs. Therefore, trained on the alternative objective of minimizing the Riemannian vector field norm in Eq.\eqref{eqn:loss_sfm}, SFM did not directly try to minimize such an ELBO. Despite such an unfavorable evaluation compared to other baselines, SFM was still able to achieve the second-best BPC, as demonstrated in Tab.\ref{tab:text8}.

\subsection{GPT-J-6B Likelihood}
GPT-J-6B \cite{gpt-j} is a transformer-based large language model with 6B trainable parameters. We follow the pipeline in \cite{campbell2024generative} to first tokenize the generated text using the provided tokenizer with GPT-J-6B. The GPT-J-6B NLL is then calculated based on the predicted logits on the token level using the pre-trained model as:
\begin{equation}
    \mathcal{L}_\text{GPT}=-\frac{1}{K}\sum_{k=1}^K \log p\left(w_k|w_{1:k-1}\right) ,
\end{equation}
where $w$ are tokens and $K$ is the number of tokens. Similarly, the entropy is calculated on the empirical distribution of the tokens based on a large number of generated texts. We noted that, as GPT-J-6B was not pre-trained on Text8, its NLL does not necessarily reflect the true data distribution in Text8, as we will demonstrate more concretely in the evaluation plot in Appendix \ref{sec:gpt_nll} and generated samples in Appendix \ref{sec:generated}.

\section{Experimental Setup}\label{sec:exp_setup}
In this section, we further describe the experimental setup, model architecture, and datasets.

\subsection{Model Parameterization}\label{sec:model_param}
Our SFM architecture is encoder-agnostic and can be applied to arbitrary discrete generative tasks. Here, we describe the common setting of the flow model across different datasets. We use the sinusoidal timestep embedding described in \cite{vaswani2017attention} as $\mathrm{Emb}:[0,1]\to\mathbb{R}^H$. We follow \cite{chen2023riemannian} to manually project the predicted vector field onto the corresponding tangent space. For the spherical manifold, the projection can be described as
\begin{equation}
    v_t(x_t)=\Tilde{v}_t(x_t)-\langle x_t,\Tilde{v}_t(x_t)\rangle x_t .
\end{equation}
For linear flow matching on the simplex, the projection can be described as
\begin{equation}
    v_t(x_t)=\Tilde{v}_t(x_t)-\frac{1}{n}\sum_{i=1}^n\Tilde{v}_t(x_t)_i .
\end{equation}
The projection guarantees that the final prediction lies on the corresponding tangent space, and the data points will stay on the manifold during Euler sampling. We will always assume the projection is performed in the following context of using $v_t$. The training stage of SFM is described in Alg.\ref{alg:train}. The time complexity of our SFM framework is dominated by the underlying vector field predictor with little overhead. Each model for binarized MNIST and promoter design was trained on a single 80GB NVIDIA A100 GPU for 6-10 hours. Each model for Text8 was trained on four 80GB NVIDIA A100 GPUs for about 7 days. We will further describe the encoders for each generation task in the following subsections.

\begin{algorithm}[ht]
    \caption{Training SFM}\label{alg:train}
    \begin{algorithmic}[1]
        \While {not converged}
            \State Sample noise distribution $\mu_0\sim p_0(\mu)$ and target distribution $\mu_1\sim q(\mu)$.
            \If {optimal transport}
                \State Do batch OT assignments of $\mu_0$ and $\mu_1$ according to the average statistical distances.
            \EndIf
            \State Apply the diffeomorphism in Eq.\eqref{eqn:mapping} to obtain $x_0=\pi(\mu_0),x_1=\pi(\mu_1)$.
            \State Sample $t\sim U[0,1]$ and interpolate $x_t=\exp_{x_0}(t\log_{x_0}x_1)$ using Eq.\eqref{eqn:exp_sphere} and \eqref{eqn:log_sphere}.
            \State Calculate the conditional vector field $u^S_t(x_t|x_0,x_1)=\frac{\diff}{\diff t}x_t=\log_{x_t}(x_1)/(1-t)$.
            \State Predict the vector field using $v(x_t,t)$ and optimize the SFM loss in Eq.\eqref{eqn:loss_sfm}.
        \EndWhile
    \end{algorithmic}
\end{algorithm}

\subsection{Model Sampling}\label{sec:model_sample}
The sampling process from the trained model can be described as solving the differential equation $\frac{\partial}{\partial t}x_t=v_t(x_t)$ from $t=0$ to 1 with the initial conditional $x_0$ sampled from the prior noise distribution. Alternatively, we can write the solution as the integral of the learned time-dependent vector through time as
\begin{equation}
    x_1=x_0+\int_0^1 v_t(x_t)\diff t .
\end{equation}
We always use the Dopri5 ODE solver \cite{DORMAND198019} for our ODE sampling and NLL calculation. For Euler method with $N$ discrete steps, the timesteps $0,1/N,2/N,\dots,(N-1)/N$ are used with a step size of $1/N$. The sampling stage is described in Alg.\ref{alg:sample}.

\begin{algorithm}[ht]
    \caption{Sampling from SFM}\label{alg:sample}
    \begin{algorithmic}[1]
        \State Sample noise distribution $\mu_0\sim p_0(\mu)$.
        \State Apply the diffeomorphism in Eq.\eqref{eqn:mapping} to obtain $x_0=\pi(\mu_0)$.
        \If {ODE sampling}
            \State Solve $\frac{\partial}{\partial t}x_t=v_t(x_t)$ using Dopri5 ODE solver with initial condition $x_0$.
        \Else \Comment{Euler method}
            \For{$t\gets 0,1/N,2/N,\dots,(N-1)/N$}
                \State $x_{t+1/N}=\exp_{x_t}(v(x_t,t)/N)$
            \EndFor
        \EndIf
        \State \textbf{return} $\mu_1=\pi^{-1}(x_1)$
    \end{algorithmic}
\end{algorithm}

\subsection{Swiss Roll}
The Swiss roll on the 2-simplex is generated by normalizing the span of the 2-dimensional Swiss roll to $[0+\varepsilon, 1-\varepsilon]$ where $\varepsilon$ is a small margin to make sure no point lies on the boundary. The third dimensional is automatically obtained as $\mu_3=1-\mu_1-\mu_2$. We fixed a random seed and generated 1000 samples for a full batch training. The vector field predictor is based on simple multi-layer perceptions (MLPs) and can be described as
\begin{equation}
    v(x_t,t)=\mathrm{MLP}(\mathrm{MLP}(x_t)\|\mathrm{MLP}(\mathrm{Emb}(t)))
\end{equation}
where $\|$ denotes concatenation and we used a hidden dimension of $H=128$. Each model was trained for 2000 epochs using full batch gradient descent with an initial learning rate of $10^{-3}$. 1000 samples were sampled using the Dopri5 ODE solver, and NLL (based on Hutchinson's trace estimator) was calculated on the whole training data with 20 repeats. The DirichletFM model was originally trained with cross-entropy loss which could not be used for this task. We instead used the binary cross-entropy loss during training and kept all the other flow-based part the same as described in the original paper \cite{stark2024dirichlet}.

\subsection{Binarized MNIST}
We used the preprocessed binarized MNIST dataset from \cite{salakhutdinov2008quantitative} which has a split of 50k/10k/10k. We adopted the CNN-based vector field predictor from \cite{song2020improved} with additional additive time embeddings at each convolution layer's output. The model has 4 residual layers \cite{he2016deep} and 4 refinement layers \cite{lin2017refinenet} with a total number of 29.8M trainable parameters. All models were trained for 100k iterations with a batch size of 256 (approximately 510 epochs) with an initial learning rate of $3\times 10^{-4}$.

For the quantitative evaluation of the generated samples, we calculated the FID score for 1000 generated samples for each model. The statistics for the binarized MNIST dataset were calculated on the whole training set via the pre-trained InceptionV3 \cite{szegedy2016rethinking} model. The NLLs reported in the main text were calculated on a fixed subset with 1000 samples of the test set using the Dopri5 ODE solver. See Appendix \ref{sec:generated} for generated images.

\subsection{Text8}
We followed previous work \cite{graves2023bayesian,austin2021structured} to use a fixed split of 90M/5M/5M with a fixed sequence length of 256. We used a 12-layer diffusion transformer (DiT) \cite{peebles2023scalable} based predictor, similar to the one used in \cite{lou2023discrete}. The resultant model has 92.4M trainable parameters. Noticeably, our model size is smaller than \cite{graves2023bayesian} which used a full 24-layer model, but is similar to \cite{lou2023discrete}. The models were trained for a total number of 3M iterations with a batch size of 512 per GPU (approximately 16 epochs), an initial learning rate of $10^{-4}$, and an exponential moving average (EMA) decay rate of 0.9999. We further used 1000 iterations as the linear warmup of the learning rate and used a decay rate of 0.8 with a patience of 3 and a validation interval of 2000 training iterations. The model snapshot with the lowest validation loss was saved for evaluation. BPCs were calculated on the first 4k sequences of length 256 of the test set (about 1M characters) with the Dopri5 solver as described in Appendix \ref{sec:bpc}, and generated texts were also obtained using the Dopri5 ODE solver. GPT-J-6B NLL and the generation token entropy were calculated based on 4k generations of length 256 (about 1M characters). See Appendix~\ref{sec:generated} for generated texts.

\subsection{Promoter Design}
We used the splits from the dataset paper \cite{avdeyev2023dirichlet} that assign Chromosome 10 to the validation set, Chromosomes 8 and 9 to the test set, and all the other 21 human chromosomes to the training set. This assignment can avoid potential data leakage on the same chromosome. The transcription signals are provided as a float number for each base pair position. We also followed \cite{avdeyev2023dirichlet} to use a total number of 100k sequences with a context window of 1024 base pairs and added a random offset of 10 to the sequence position during training. The vector field predictor we used was identical to that in \cite{avdeyev2023dirichlet}, with 20 stacks of 1D convolutional layers and a total number of 13.3M trainable parameters. Our models were trained for 200k iterations with a batch size of 256 and an initial learning rate of $5\times 10^{-4}$. The model snapshot with the lowest validation SP-MSE on the validation set was saved for evaluation. The test SP-MSE was evaluated on the generated samples on the full test set's transcription signals with 300 Euler steps of generation.

\section{Additional Result}
In this section, we provide additional results of the evaluation on Text8, additional ablation studies, and generated samples for each task.

\subsection{Additional Result on Text8}\label{sec:gpt_nll}

\begin{figure}[htb]
    \centering
    \includegraphics[width=0.7\linewidth]{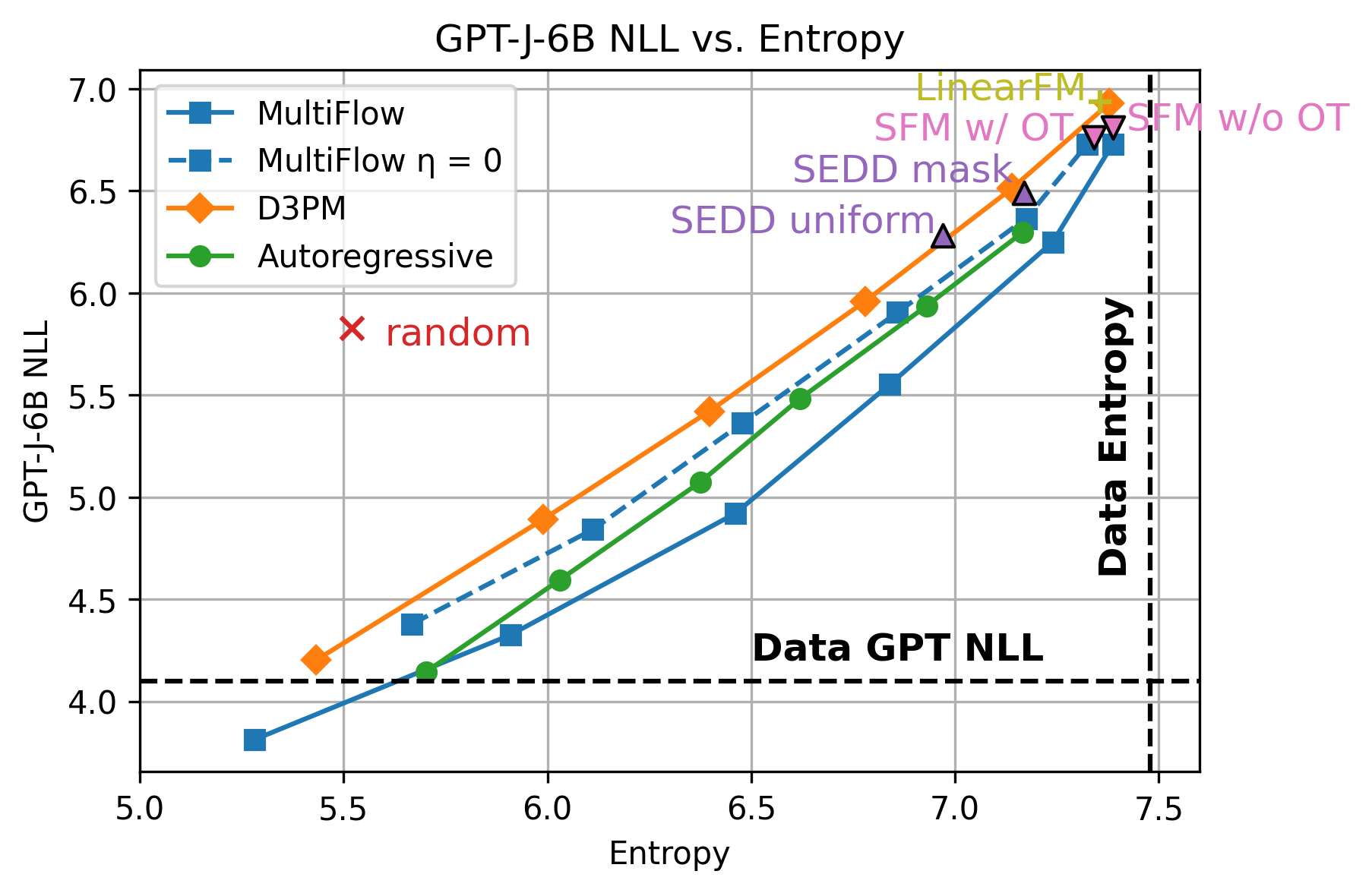}
    \vspace{-.5em}
    \caption{GPT-J-6B NLL versus sample entropy. For MultiFlow, D3PM, and autoregressive, the curve represents different logit temperatures from 0.5 to 1. Baseline data are from \cite{campbell2024generative}.}
    \label{fig:gpt}
\end{figure}

We further follow \cite{campbell2024generative} (MultiFlow) to calculate the NLL using GPT-J-6B \cite{gpt-j}, a pre-trained autoregressive large language model. As such an NLL can be easily fooled by repeating high-frequency words, we also follow \cite{campbell2024generative} to report additional token entropy, for which a closer entropy to data is preferred. The GPT-J-6B NLL and the token entropy are plotted in Fig.\ref{fig:gpt}, with the raw baseline data and the permission to reproduce provided by the authors of \cite{campbell2024generative}. An additional random baseline that generates each character independently based on the letter frequency was provided. The ground truth data NLL and entropy are also included as the dotted horizontal and vertical lines. They demonstrate the best results any model can possibly achieve. Our SFM tended to better capture the diversity of the text corpus with the best entropy. Generated texts are provided in Appendix \ref{sec:generated}.

As GPT-J-6B was not trained on Text8, we noted a caveat that its NLL may not necessarily reflect the Text8 distribution fitness. From the MultiFlow results, such an NLL can be made artificially low 
by duplicating high-frequency words, e.g., repeated numbers with little semantic meaning as in the low-temperature MultiFlow generations (see Appendix \ref{sec:generated} for examples). We also noted that such an NLL can be easily fooled by randomly generated strings based on letter frequency. Additionally, low-temperature MultiFlow variants achieved lower NLLs than the ground truth data, making this metric less credible. We noted the huge impact of temperature on MultiFlow as it generated more repeated numbers with lower temperatures. In contrast, SFM achieved perceptually similar or even better results with more diversity.

\subsection{Ablation Study} \label{sec:ablation}
We provide ablation studies of the effect of different sampling methods, different sampling steps for the Euler methods, and different $t_\text{max}\approx 1$ for NLL calculation. The results are provided in Tab.\ref{tab:ablation_nll}. The effect of the variational timestep $t_\text{max}$ matched the results from \cite{avdeyev2023dirichlet}, which stated that a closer timestep to the target data would decrease the NLL. The reasons we choose $t=0.995$ instead of higher values are: 1) we noticed numerical stability issues at a timestep very close to the target, 2) we want a fair comparison with \cite{avdeyev2023dirichlet} which used an equivalent timestep of 0.9975, and 3) as long as we use a same timestep, the results are comparable within. We also found that NLL increased with the number of Euler sampling steps. We hypothesize that it is due to the large divergence change at $t\approx 1$ and naive Euler steps tend to overestimate its contribution. Nonetheless, the Euler method with more than 300 steps gave a similar NLL as the ODE solver. We further provide additional results of the NLLs and FID scores on different sample methods in Tab.\ref{tab:ablation_fid} in which we used $t=0.995$ and 300 Euler steps. The final performance was quite similar between these two sampling methods.

\begin{table}[ht]
\begin{minipage}{.43\textwidth}
    \centering
    \caption{NLL for different sampling methods, sampling steps, and $t_\text{max}$.}
    \label{tab:ablation_nll}
    \resizebox{\textwidth}{!}{
    \begin{tabular}{@{}lllc@{}}
    \toprule
    Method                 & \#step                   & $t_\text{max}$                   & NLL$\downarrow$                \\ \midrule
    \multirow{7}{*}{Euler} & \multirow{4}{*}{300}     & 0.9995                 & -2.545 $\pm$ 0.029 \\
                           &                          & 0.999                  & -2.442 $\pm$ 0.018 \\
                           &                          & 0.995                  & -1.689 $\pm$ 0.031 \\
                           &                          & 0.99                   & -1.019 $\pm$ 0.032 \\ \cmidrule(lr){2-4}
                           & \multicolumn{1}{l}{100}  & \multirow{3}{*}{0.995} & -1.871 $\pm$ 0.020 \\
                           & \multicolumn{1}{l}{500}  &                        & -1.677 $\pm$ 0.022 \\
                           & \multicolumn{1}{l}{1000} &                        & -1.647 $\pm$ 0.024 \\ \midrule
    ODE                    & -                        & 0.995                  & -1.687 $\pm$ 0.020 \\ \bottomrule
    \end{tabular}
    }
\end{minipage}%
\hfill
\begin{minipage}{.53\textwidth}
    \centering
    \caption{NLL and FID for different sampling methods with $t_\text{max}=0.995$. For the Euler method, we used 300 Euler steps. }
     \label{tab:ablation_fid}
    \begin{tabular}{@{}lccc@{}}
    \toprule
    Method                 & Model      & NLL$\downarrow$    & FID$\downarrow$ \\ \midrule
    \multirow{3}{*}{ODE}   & SFM w/ OT  & -1.687 $\pm$ 0.020 & 4.62            \\
                           & SFM w/o OT & -1.631 $\pm$ 0.027 & 5.15            \\
                           & LinearFM   & 0.622 $\pm$ 0.022  & 5.91            \\ \midrule
    \multirow{3}{*}{Euler} & SFM w/ OT  & -1.689 $\pm$ 0.031 & 5.00            \\
                           & SFM w/o OT & -1.653 $\pm$ 0.028 & 4.86            \\
                           & LinearFM   & 0.499 $\pm$ 0.022  & 6.47            \\ \bottomrule
    \end{tabular}
\end{minipage}
\end{table}

\subsection{Generated Samples} \label{sec:generated}

\begin{figure}[ht]
    \centering
    \includegraphics[width=\linewidth]{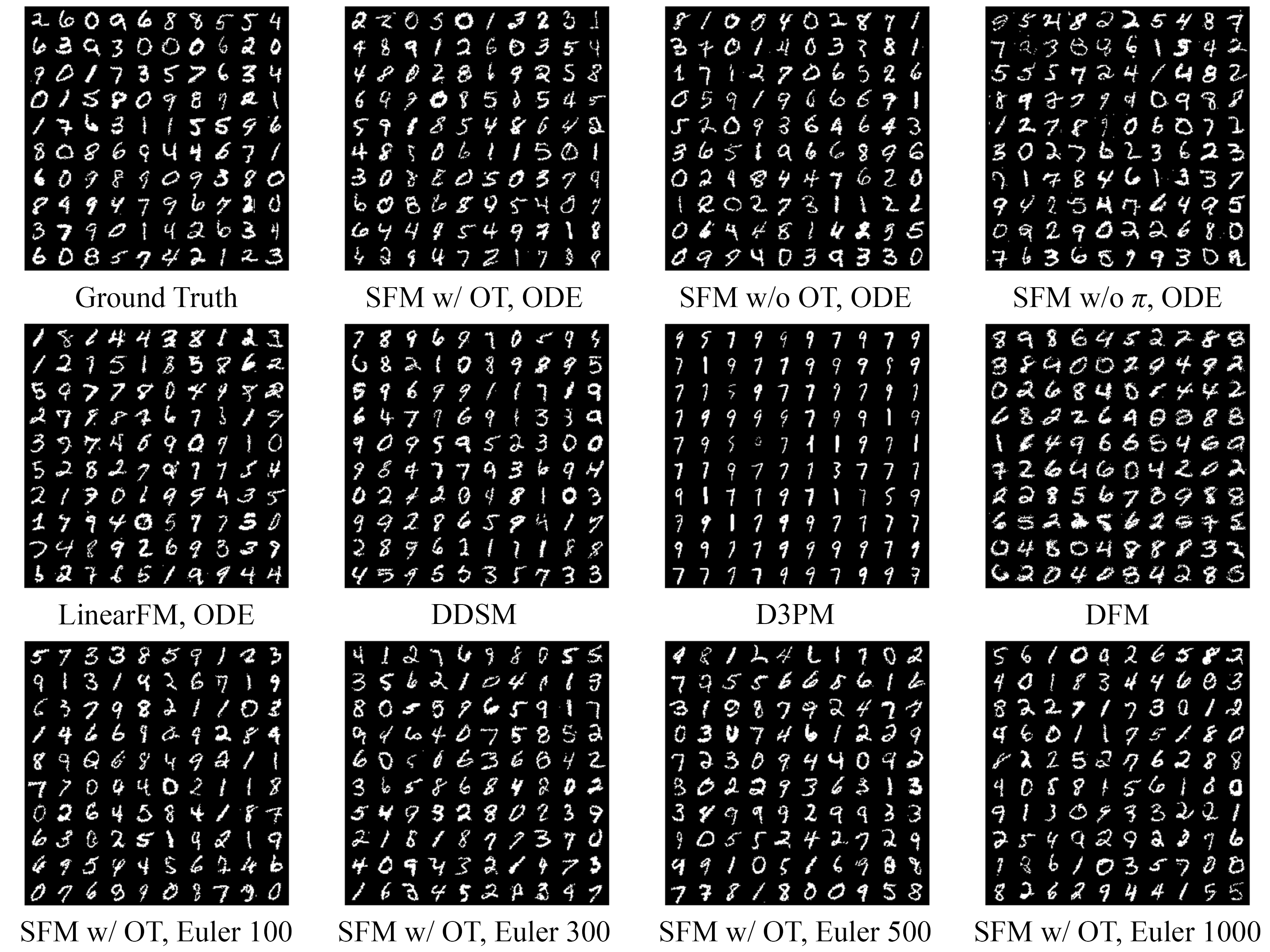}
    \caption{Generated samples of the binarized MNIST dataset from various models and different sampling settings.}
    \label{fig:bmnist}
\end{figure}

The ground truth images and generated samples of the binarized MNIST dataset using various models and different sampling settings are demonstrated in Fig.\ref{fig:bmnist}. The lower row demonstrates generated samples with different numbers of Euler steps. 

For the Text8 dataset, generated texts, NLL, entropy, and per sample NLL evaluated by GPT-J-6B using different models and sampling methods are demonstrated in Tab.\ref{tab:text8_sample}, with one sample with an NLL chosen around the average, one above the average, and one below the average for each model to provide a more comprehensive demonstration of the generation. The MultiFlow with different sampling temperatures used the model checkpoint in \cite{campbell2024generative}. As SFM directly samples categorical probabilities instead of making multiple discrete jumps with logits, it is incompatible with temperature-tuning.

\renewcommand{\arraystretch}{1.5}

\begin{table}[ht]
\centering
\caption{Generated samples and GPT-J-6B NLLs. For each model, the average NLL and entropy calculated on 4k generations are also provided. The NLL marked for each generation is the sample NLL, as evaluated by GPT-J-6B.}\label{tab:text8_sample}
\vspace{1em}
\small
\hspace*{-4em}
\begin{tabular}{@{}l@{}}
% SFM w/ OT, ODE, NLL: 7.154, Entropy: 7.347\\ 
% \quad\begin{lstlisting}
% play_a_new_life_most_of_these_s_hopen_kvans_coming_as_well_known_as_the_guten_kvans_coans_and_a_scopes_the_induce_sars_a_summer_
% klan_also_drerd_the_a_track_vero_can_secure_changings_to_projects_manager_to_strive_features_rules_of_text_toward_and_to_manager
% \end{lstlisting}\quad NLL: 6.552\\
% \quad\begin{lstlisting}
% _the_origin_of_the_euilide_c_rmmen_belle_but_llythm_ag_as_evang_lical_ukazie_or_many_high_arwyoners_are_confessed_longwordarwoon
% ers_if_they_are_aolids_near_their_floats_do_not_autoarmvate_ar_longer_and_common_love_linding_londword_from_the_darker_distinct_
% \end{lstlisting}\quad NLL: 7.049\\
% \quad\begin{lstlisting}
% he_equilaled_opininns_of_israel_in_the_luthenan_church_their_origin_list_of_greyious_patricige_s_lines_ending_the_protrgaiedan_t
% he_sijill_laten_presidents_could_also_little_with_remails_concrete_continentially_into_letters_of_the_early_final_catholic_chury
% \end{lstlisting}\quad NLL: 7.378\\

SFM w/ OT, ODE, NLL: 6.762, Entropy: 7.340\\ 
\quad\begin{lstlisting}
zero_zero_zero_more_as_well_as_the_needed_of_all_of_it_church_the_country_s_higner_upcoming_bank_the_country_comment_on_quebec_e
dits_includes_the_account_of_diego_hyle_ciaspare_coes_tain_three_zero_seven_zero_millimeter_if_south_of_the_south_leo_jordan_the
\end{lstlisting}\quad NLL: 6.336\\
\quad\begin{lstlisting}
such_as_in_outcarge_of_coincination_with_mows_such_as_adler_martie_the_hilly_patt_evedhon_of_morcele_s_night_of_blood_the_tremen
t_of_eliensberg_while_an_ulav_at_esrheim_that_he_had_to_proved_left_mainied_this_label_is_in_hellenistic_separatism_the_falix_ro
\end{lstlisting}\quad NLL: 6.805\\
\quad\begin{lstlisting}
t_orator_lemmoi_s_mother_toury_ghost_for_his_history_on_a_blaster_the_three_stallman_family_sources_including_the_film_that_a_ro
mance_nine_author_higtly_lacaded_the_second_harmour_open_source_for_which_orrie_changed_the_bluebogs_books_moy_s_athlite_s_medit
\end{lstlisting}\quad NLL: 7.522\\

% SFM w/o OT, ODE, NLL: 7.071, Entropy: 7.388 \\
% \quad\begin{lstlisting}
% _the_speculation_of_abra_mbee_room_i_ll_i_he_raisks_heres_he_gives_why_i_kh_a_room_that_all_i_and_try_if_s_l_thy_people_to_m_as_
% an_sb_line_and_thirs_can_be_and_to_exbuild_in_one_or_the_more_of_resources_were_indicted_to_have_known_one_stageless_preckfelt_o
% \end{lstlisting}\quad NLL: 6.772\\
% \quad\begin{lstlisting}
% on_will_like_rising_to_the_gs_felt_if_the_game_a_straight_toop_to_his_n_tonitus_is_used_and_oven_the_carbine_of_grijelle_galbert
% _gisas_corporated_with_takes_place_of_silinois_at_the_game_pact_you_cohand_hilruman_aukles_and_computers_later_plaguo_a_pight_in 
% \end{lstlisting}\quad NLL: 6.984\\
% \quad\begin{lstlisting}
% _tlagether_still_who_was_the_rely_of_history_desire_a_poet_royal_banu_reporter_kh_no_tio_his_wife_he_was_anducioed_yum_knoibeeon
% _hair_kane_io_in_suffusion_saproe_ge_eno_novel_dare_jane_as_edhard_charles_quiry_the_arkimine_and_cinag_for_arpaxton_s_ruling_th
% \end{lstlisting}\quad NLL: 7.213\\

SFM w/o OT, ODE, NLL: 6.811, Entropy: 7.387 \\
\quad\begin{lstlisting}
_became_known_as_the_shacon_valley_to_the_heaven_green_and_in_the_middle_of_the_lechneit_tracked_the_line_kej_nis_a_valley_one_p
inochules_this_was_verified_by_many_charterly_brollary_applications_including_those_which_synonymous_with_orbits_some_of_the_mas
\end{lstlisting}\quad NLL: 6.407\\
\quad\begin{lstlisting}
cable_now_masi_had_little_to_port_from_six_eight_nine_made_hofavor_a_new_printer_of_disruption_this_platforv_would_be_faving_to_
the_current_country_but_this_need_for_saw_della_even_this_four_one_three_bit_moil_callers_did_soo_after_a_as_n_if_platform_for_t
\end{lstlisting}\quad NLL: 6.819\\
\quad\begin{lstlisting}
nomic_ancestor_wh_meil_berg_hiarst_red_rthonstrak_utter_upon_technology_baddendin_models_on_bendrays_hypothesies_anti_aer_dynami
cs_work_have_been_intelligent_to_develop_an_european_astronomic_conifice_in_the_production_of_ten_conifices_of_develop_and_princ
\end{lstlisting}\quad NLL: 7.479\\

% LinearFM, ODE, NLL: 7.490, Entropy: 7.118  \\
% \quad\begin{lstlisting}
% commary_e_reak_sed_elterif_the_taerolf_cemmuter_is_the_luklity_it_anortion_for_the_ct_den_wilky_famous_belfahies_in_myroheasm_ba
% y_hasdom_get_a_pongly_non_ikualim_oper_y_is_suring_unes_tacolies_of_chndes_calice_m_is_hmries_wike_res_in_the_corem_rf_at_rails_
% \end{lstlisting}\quad NLL: 7.013\\
% \quad\begin{lstlisting}
% l_like_in_more_rode_that_anioaa_cas_whollong_the_name_pairty_ic_refaster_iaise_by_the_bedan_eurk_and_foed_to_thise_and_l_athamme
% d_revohay_befol_the_affic_ulaterosix_slhai_haltrio_was_the_worth_csup_li_the_inition_suluan_aptions_was__not_avwits_indenioal_it 
% \end{lstlisting}\quad NLL: 7.558\\
% \quad\begin{lstlisting}
% ng_on_the_uktain_sulte_preighae_vowelalg_vagants_bihalis_quar_first_and_greencockinly_eightbar_eight_sirit_icv_to__out_is_effect
% ile_ticwitseen_over_ones_of_pur_west_giant_five_line_and_wits_to_addentsfaption_parthaed_telebindly_the_lefters_confilm_tidoman_
% \end{lstlisting}\quad NLL: 7.844\\

LinearFM, ODE, NLL: 6.935, Entropy: 7.356  \\
\quad\begin{lstlisting}
is_resulted_in_gawzik_college_in_the_five_season_of_feason_at_twice_the_atmosphere_is_named_after_the_list_called_him_before_inn
_s_college_at_stulpford_university_of_london_also_cambridge_the_burroughs_henrians_college_which_is_yelled_apollo_one_college_na
\end{lstlisting}\quad NLL: 6.466\\
\quad\begin{lstlisting}
ne_two_eight_zero_perhaps_that_one_s_stream_roman_frxwuapered_the_practices_of_telleeist_speakership_settled_and_an_army_of_the_
two_set_of_love_relationships_the_foundation_of_the_colfederation_homewater_to_during_the_civil_war_or_dan_brown_xian_john_zinso
\end{lstlisting}\quad NLL: 6.935\\
\quad\begin{lstlisting}
level_mortans_already_sick_but_evade_dissolve_the_moses_of_auctional_with_deng_about_four_sekes_there_was_a_moikade_problem_to_p
eople_who_receive_signed_grief_of_culture_of_the_middle_bone_island_for_a_more_designation_of_a_kick_trade_bands_and_rangers_bom
\end{lstlisting}\quad NLL: 7.454\\

MultiFlow, $T=1$, NLL: 6.728, Entropy: 7.387\\ 
\quad\begin{lstlisting}
er_of_the_soap_opera_by_andrew_wills_goosecat_productions_one_nine_nine_one_the_sea_monsters_of_the_late_one_nine_nine_zero_s_th
e_famous_woman_stanley_goodman_jerdre_mcnabb_out_of_zoom_movie_barry_leroy_barbara_lewis_and_brenda_punceco_aka_sney_steary_aka_
\end{lstlisting}\quad NLL: 5.906\\
\quad\begin{lstlisting}
she_hill_obhalarnnach_eochrans_eileann_munthar_cearlomha_mhaonna__tardare_mho_mord_tore_lian_linn_mu_phaile_gael_cangallauig_lao
thuis_guilleith_leois_glion_guildh_lara_gall_innonte_tilbonne_guilecht_shuachtansert_guillaste_guatnaoic_asthache_cuichant_conai
\end{lstlisting}\quad NLL: 6.648\\
\quad\begin{lstlisting}
lde_its_replacement_and_or_not_mist_mere_decabeod_man_and_drast_m_ek_or_ubangostrades_dialogue_or_connon_cainne_as_follows_make_
wolsey_conane_i_get_clean_to_contemplate_the_static_problem_to_reduce_it_into_perception_for_frbellist_man_jewish_views_the_othe
\end{lstlisting}\quad NLL: 7.426\\

MultiFlow, $T=0.9$, NLL: 6.249, Entropy: 7.240 \\
\quad\begin{lstlisting}
inism_weaver_routledge_new_york_w_g_toland_one_nine_nine_two_women_texts_gender_history_raymond_lynn_lucky_henry_pecher_one_nine
_eight_eight_the_modern_women_lyceum_new_york_harper_mead_one_nine_seven_eight_wharke_george_one_nine_nine_one_modern_history_ex
\end{lstlisting}\quad NLL: 5.277\\
\quad\begin{lstlisting}
eight_it_deplets_the_size_of_the_starters_of_the_high_land_of_the_new_tahpu_co_bon_monte_tucals_and_quitla_land_as_elen_de_las_a
tlas_as_landous_pierce_the_torch_crack_into_steamy_places_to_eatons_hence_to_weed_out_their_blood_from_them_and_urge_them_to_hea
\end{lstlisting}\quad NLL: 6.352\\
\quad\begin{lstlisting}
ion_of_jack_molice_now_fise_bob_springfield_slurs_boiling_funny_fruit_feed_and_chasing_rocked_duos_off_r_e_s_life_both_sides_had
_an_unwanted_and_violent_violence_and_the_rage_gained_a_recent_show_the_trial_was_the_clash_of_john_d_d_the_first_trial_at_which
\end{lstlisting}\quad NLL: 7.074\\

MultiFlow, $T=0.8$, NLL: 5.552, Entropy: 6.840  \\
\quad\begin{lstlisting}
ht_six_four_alexander_leroda_russian_writer_d_one_nine_one_nine_one_eight_six_seven_daniel_chase_american_author_d_one_nine_thre
e_two_one_eight_seven_eight_adolf_luroda_austrian_composer_d_one_nine_six_four_one_eight_eight_two_robert_w_keisler_american_pub
\end{lstlisting}\quad NLL: 4.090\\
\quad\begin{lstlisting}
umadhushah_ashara_uladineshah_sahiraj_singh_one_five_seven_nine_rajnav_singh_ajmandharajaghar_singh_rohanjit_singh_one_five_eigh
t_zero_sardoyar_teachings_of_narshenhara_singh_one_five_eight_one_pranajmahr_jaharkara_jogyar_grandson_of_ujearjeer_one_five_nin
\end{lstlisting}\quad NLL: 5.223\\
\quad\begin{lstlisting}
y_daughter_god_originates_to_come_as_a_pot_of_inspiration_all_one_of_whom_shall_witness_the_extortion_unto_you_and_will_ask_your
_neighbour_do_you_hear_and_rempskyour_lord_and_all_our_love_are_wise_tough_hope_speaked_and_thy_beautiful_names_rather_cool_when
\end{lstlisting}\quad NLL: 6.840
\end{tabular}
\end{table}

% \clearpage  
% \newpage
% \input{secs/7_checklist}

\end{document}